\theoremstyle{plain}
\newtheorem{theorem}{Theorem}[section]
\newtheorem{lemma}[theorem]{Lemma}
\newtheorem{corollary}[theorem]{Corollary}
\theoremstyle{definition}
\theoremstyle{remark}
\newcommand{\rom}[1]{\uppercase\expandafter{\romannumeral #1\relax}}
\def\first#1{\fcolorbox{black}{black!15}{\textbf{#1}}}
\def\second#1{\fcolorbox{black!50}{black!5}{#1}}
\DeclareMathOperator{\Mat}{\mathrm{Mat}}
\DeclareMathOperator{\Ten}{\mathrm{Ten}}
\DeclareMathOperator{\rank}{\mathrm{rank}}
\title{Geometry-aware training of factorized layers in tensor Tucker format}
\author{%
  Emanuele Zangrando\thanks{Equal Contribution} \\
   School of Mathematics, \\
  Gran Sasso Science Institute, \\
  L'Aquila, Italy \\
   \texttt{emanuele.zangrando@gssi.it
} \\
   \And Steffen Schotth\"ofer$^*$ \\
     Computer Science and Mathematics Division,\\
  Oak Ridge National Laboratory, \\
  Oak Ridge, TN, USA\\
  \texttt{schotthoefers@ornl.gov} \\
  \And
  Gianluca Ceruti\\
  Department of Mathematics, \\
  University of Innsbruck, \\
  Innsbruck,  Austria \\
   \texttt{gianluca.ceruti@uibk.ac.at
} 
      \And
  Jonas Kusch \\
  Department of Data Science, \\
Norwegian University of Life Sciences, \\
  Ås, Norway\\
  \texttt{jonas.kusch@nmbu.no
} \\
      \And
  Francesco Tudisco \\
School of Mathematics and Maxwell Institute, \\
University of Edinburgh, 
Edinburgh, UK; \\
School of Mathematics, \\
Gran Sasso Science Institute, 
  L'Aquila, Italy \\
  \texttt{f.tudisco@ed.ac.uk}     }
\begin{document}

\maketitle

\begin{abstract}
Reducing parameter redundancies in neural network architectures is crucial for achieving feasible computational and memory requirements during training and inference phases. Given its easy implementation and flexibility, one promising approach is layer factorization, which reshapes weight tensors into a matrix format and parameterizes them as the product of two small rank matrices. However, this approach typically requires an initial full-model warm-up phase, prior knowledge of a feasible rank, and it is sensitive to parameter initialization. 
In this work, we introduce a novel approach to train the factors of a Tucker decomposition of the weight tensors. Our training proposal proves to be optimal in locally approximating the original unfactorized dynamics independently of the initialization. Furthermore, the rank of each mode is dynamically updated during training.
We provide a theoretical analysis of the algorithm, showing convergence, approximation and local descent guarantees. The method's performance is further illustrated through a variety of experiments, showing remarkable training compression rates and comparable or even better performance than the full baseline and alternative layer factorization strategies.
\end{abstract}

\section{Introduction}
The memory footprint and computational cost for inference and training are major limitations of modern deep learning architectures. A variety of techniques have been developed to address this issue, aiming to reduce model size and computational complexity. 
Popular approaches include weight sparsification \cite{guo2016dynamic, molchanov2017pruning,he2017channel} and quantization \cite{wu2016quantized, courbariaux2016binarized}. However, pruning via sparsification struggles to take advantage of GPU hardware designed for dense matrices, and it is difficult to provide error estimates on model performance when using quantization \cite{gholami2021survey}. Moreover, while able to reduce resource requirements for inference, these methods struggle to achieve memory reduction during training without affecting performance. As pointed out in \cite{frankle2018lottery,gholami2021survey}, training accurate sparse or quantized neural networks from the start is particularly challenging.
At the same time, the training phase of modern architectures can require several days on several hundreds of GPUs \cite{baker2022video}, requiring huge memory storage and energy consumption overheads. Thus, being able to reduce the resource demand of both inference and training while maintaining model performance is of critical importance.

Along with sparsification and quantization, layer factorization is another popular and successful model compression approach. Representing layer weights using different types of matrix and tensor factorizations can yield huge memory reduction while retaining model performance and robustness \cite{savostianova2023robust,cisse17_parseval}. A wealth of recent work has shown theoretical and experimental evidence suggesting that layer weights of over-parametrized networks tend to be low-rank \cite{arora2019implicit,Bah_2022,galanti2022sgd} and that removing small singular values may even lead to increased model performance while dramatically reducing model size \cite{sharma2023_laser,Schothoefer_2022}. 
One significant advantage of low-rank factorizations is that a low-parametric factorized model can be used throughout the entire training \cite{wang2021pufferfish, khodak2021initialization, Schothoefer_2022} and fine-tuning phase \cite{hu2022lora,wang2023multilora}. The most direct way of doing this is by representing each layer's weight tensor $W$ as the product of small factors and then directly training each factor independently in a block-coordinate descent. In the matrix case, this boils down to imposing the rank-$r$ parametrization $W=UV^\top$ for each layer $W$, with $U,V$ rectangular matrices with only $r$ columns. The network is then trained on the set of rank-$r$ matrices $\mathcal M_r = \{UV^\top : U,V \text{ have $r$ columns}\}$, interpreting the loss as a function of $U,V$ alone. This is the approach taken also by popular fine-tuning strategies such as LoRA \cite{hu2021lora, zhao2024galore, lialin2023relora}.
When $W$ is a higher-dimensional tensor, as in the case of convolutional kernels for example, the same approach can be implemented using different types of tensor low-rank factorizations, including canonical polyadic (CP) and Tucker formats \cite{lebedev2015speeding,Li_basis_CNN,Phan2020StableLT,Song_CP,kossaifi_tnet,Gabor_Tucker}.  While direct training of a layer's factors is widely used in deep learning, this approach has two major disadvantages:
\begin{enumerate}[leftmargin=*,noitemsep,topsep=0pt]
    \item \label{pt1} The rank $r$ of the factorization needs to be chosen a-priori, and the performance of the compressed model can highly depend on it \cite{sharma2023_laser,wang2021pufferfish};
    \item \label{pt2} The training flow is highly sensitive with respect to the choice of the initialization, which may result in a high-oscillatory slow-converging loss and sub-optimal performance and may require a warm-up phase during which the full model is trained prior to the rank reduction \cite{wang2021pufferfish, khodak2021initialization,Schothoefer_2022}. 
\end{enumerate}
Point \ref{pt2}, in particular, is directly related to the geometry of the constraints set. In the matrix case, it is well-known that $\mathcal M_r$ is a Riemannian manifold with points of very high curvature near small singular values \cite{feppon2018geometric}. These points give rise to stiffness and result in ill-conditioning. This intrinsic poor conditioning can be overcome by projecting the gradient flow on the tangent bundle of $\mathcal M_r$ as presented in \cite{Schothoefer_2022}.

In the higher-dimensional tensor case, we face the same problems. However, trying to adapt the approach for matrices to other tensor factorizations is not trivial, as it may lead to prohibitive computational costs and memory requirements scaling with the order of the tensor. Moreover, not all the tensor factorizations have the required Riemannian structure to design projections and tangent planes.
This paper introduces a rank-adaptive geometry-aware training algorithm that trains factorized tensor layers in Tucker format taking advantage of the underlying Riemannian structure, yielding strictly better performance than direct factorizations and overcoming both Points \ref{pt1} and \ref{pt2} above.  Our main contributions~are:
\begin{itemize}[leftmargin=*,noitemsep,topsep=0pt]
    \item We design an algorithm for training tensor layers in Tucker format that is \textbf{rank-adaptive}, as the ranks of the layers are dynamically updated during training to match a desired compression rate;
    \item We provide theoretical guarantees of loss  \textbf{descent}, \textbf{convergence} to stationary points in expectation, and \textbf{approximation} to the ideal full model;
    \item We provide extensive \textbf{experimental evaluation} showing that the proposed method yields remarkable training compression rates (e.g.,\ more than $95\%$ for VGG16 on CIFAR10), while achieving comparable or even better performance than the full baseline and alternative baselines.
\end{itemize}

\subsection{Related work}
 Related work on network compression methods differs structurally by the mathematical object of consideration, i.e., matrix- or tensor-valued parameter structures, as well as the type of parameter reduction. Weight pruning~\cite{han2015learning, narang2017exploring, ullrich2017soft,molchanov2017pruning, wang2021convolutional} enables parameter reduction by enforcing sparsity, i.e., zero-valued weights,  whereas low-rank compression imposes parameter reduction by factorization of weight matrices~\cite{Idelbayev_2020_CVPR, Li_basis_CNN, wang2021pufferfish,Sainath_highdimout,yang2020learning} and tensors~\cite{lebedev2015speedingup, Song_CP, Astrid_Powermethod, Phan2020StableLT, kossaifi_tnet, kim2016compression, kossaifi_tnet, tensor_NN,ioannou2016trainingcnnslowrankfilters}. On top of approaches that transform tensor layers into compressed matrices \cite{Schothoefer_2022, Idelbayev_2020_CVPR, Li_basis_CNN, wang2021pufferfish}, different tensor decompositions have been used to compress convolutional layers. Such approaches include CP decomposition \cite{lebedev2015speedingup, Song_CP, Astrid_Powermethod, Phan2020StableLT}, Tucker \cite{kossaifi_tnet, kim2016compression}, tensor trains \cite{kossaifi_tnet, tensor_NN} or a combination of these \cite{Gabor_Tucker}. Other works focus on performing efficient optimization on Stiefel manifolds to preserve orthonormality, with methods based on regularization or landing \cite{massart_rectangular,bansal_orthogonality,cisse17_parseval,pmlr-v151-ablin22a}, cheap parametrizations of orthogonal groups \cite{pmlr-v97-lezcano-casado19a,Li2020Efficient} and Riemannian schemes \cite{mishra2013fixedrankmatrixfactorizationsriemannian,savostianova2023robust,absil_projection_like}.
Other methods consider only the floating point representation of the weights, e.g.~\cite{37631, gong2014compressing, gupta2015deep, courbariaux2015training, venkatesh2016accelerating}, or a combination of the above~\cite{Liu_2015_CVPR}. 
From the algorithmic point of view, related work can be categorized into methods that compress networks entirely in a postprocessing step after full-scale training~\cite{nagel2019datafree, mariet2017diversity, lebedev2015speedingup, kim2016compression, Gabor_Tucker, Astrid_Powermethod}, iterative methods where networks are pre-trained and subsequently compressed and fine-tuned~\cite{han2015learning, Idelbayev_2020_CVPR, wang2021pufferfish}, and methods that directly compress networks during training~\cite{Schothoefer_2022, narang2017exploring}. As no full-scale training is needed, the latter approach offers a better potential reduction of the overall computational footprint. 
Only a few of these methods propose strategies for dynamically choosing the compression format during training or fine-tuning,
e.g., by finding the ranks via alternating, constraint optimization in discrete~\cite{Li_2018_ECCV} and discrete-continuous fashions~\cite{Idelbayev_2020_CVPR}. However, both these approaches require
knowledge of the full weights 
during training and overall
are more computationally demanding than standard training. In~\cite{Schothoefer_2022}, a rank-adaptive evolution of the gradient flow on a low-rank manifold was proposed to train and compress networks without using the full-weight representation; however, only for matrix-valued layers. While a direct extension of this method to Tucker tensors is possible, the resulting algorithm exhibits a prohibitive memory footprint and computational complexity.
The development of rank-adaptive training methods for tensor-valued layers poses non-trivial challenges that may prevent loss descent and performance of the compressed net. 
For example, numerical instabilities arising from the CP decomposition during training have been observed in \cite{lebedev2015speedingup} and \cite{Phan2020StableLT}. 

\section{Geometry-aware training in Tucker format}\label{sec_tucker_conv}
For a tensor $W$, we write $\Mat_i(W)$ to denote the matrix obtained by unfolding $W$ along its $i$-th mode. The tuple $\rho = (r_1,r_2,\dots,r_{d})$ is called  Tucker rank of $W$ if $r_i = \rank(\Mat_i(W))$. Every $d$-order tensor $W$ with Tucker rank $\rho = (r_1,\dots,r_d)$ can be written in Tucker form (or Tucker decomposition) $W = C \times_{i=1}^d U_i$, entry-wise defined as 
\begin{align*}
    W(i_1,\dots,i_d) =\sum_{\alpha_1,\dots,\alpha_d=1}^{r_1,\dots,r_d}C(\alpha_1,\dots,\alpha_d)U_1(i_1,\alpha_1)\cdots U_d(i_d,\alpha_d)
\end{align*}
where $C\in \mathbb R^{r_1\times \dots \times r_d}$ is a \textit{core tensor} of full Tucker rank $\rho = (r_1,\dots,r_d)$ and the $U_i\in \mathbb R^{n_i\times r_i}$ are matrices with orthonormal columns. 
From this representation, we immediately see that if $W$ is represented in Tucker format, then the cost of storing $W$ and of performing linear operations with $W$ (e.g.\ matvecs or convolutions) is $O(r_1\cdots r_d+n_1r_1+\cdots +n_dr_d)$, as opposed to the $O(n_1\cdots n_d)$ cost required by the standard full representation. When $n_i\gg r_i$, e.g., $n_i>1.5 r_i$, the latter is much larger than the former. 

In the following, we develop a rank-adaptive algorithm that trains layers in Tucker form in a robust and efficient manner. Our derivation follows five points:
\begin{enumerate}[leftmargin=*,noitemsep,topsep=0pt]
    \item We introduce the dynamical low-rank approximation framework in Section~\ref{sec:main_method}, which provides gradient flows for layers in Tucker format. However, the direct use of these evolution equations to train the network will require prohibitively small learning rates due to the high curvature of the manifold of Tucker tensors.
    \item We introduce a reparameterization in Theorem~\ref{thm:formula_Ki} that allows us to formulate robust dynamics for the reparametrized factors. 
    \item By integrating numerically the resulting gradient system (with e.g.\ SGD as explicit Euler) along with a basis augmentation step, we propose a geometry-aware rank-adaptive training strategy for the network in Tucker format. 
    This approach, however, requires $d+1$ forward and backward evaluations of the network, resulting in significantly increased computational costs.
    \item We show in Corollary~\ref{corr_gradient_trick} that computational costs can be substantially reduced by noting that the computation of the augmented basis can largely be simplified. This leads to our proposed training scheme in  Algorithm~\ref{alg_efficient_TDLRT}. The algorithm is equivalent to the integration of the gradient system but requires only two instead of $d+1$ gradient evaluations. 
    \item Due to its equivalence to the approach constructed in point 3, we can show that  Algorithm~\ref{alg_efficient_TDLRT} indirectly updates weights along straight lines on the manifold, thus leading to three main theoretical properties: loss descent (Theorem~\ref{theo_desc}), convergence in expectation (Theorem~\ref{thm:convergence}), and a robust bound showing approximation of the full model (Theorem~\ref{theo_acc}).
\end{enumerate}

\subsection{Dynamical low-rank approximation}\label{sec:main_method} 

For $\rho=(r_1,\dots,r_d)$, the set 
$$\mathcal M_\rho = \{W:\rank(\Mat_i(W))=r_i,\,  i=1,\dots,d\}$$
is a manifold with the following tangent space at any point $W=C\times_{i=1}^d U_i\in \mathcal M_\rho$ \cite{Lubich_DTA}
\begin{align}\label{eq:tangent_space}
\textstyle
    T_W\mathcal M_\rho = \Big\{\delta C \mathop{\times}\limits_{i=1}^d U_i + \mathop{\textstyle \sum}\limits_{j=1}^d C\times_j \delta U_j \mathop{\times}_{k\neq j}U_k : 
    \delta C\in \mathbb R^{r_1\times \dots \times r_d},\,  \delta U_j \in T_{U_j}\mathcal S_j\Big\}
\end{align}
where $\mathcal S_j$ is the Stiefel manifold of real $n_i\times r_i$ matrices with orthonormal columns. To design a  strategy that computes layer weights within $\mathcal M_\rho$ using only the low-rank Tucker factors $C$ and $\{U_i\}_i$, we formulate the training problem as a continuous-time gradient flow projected onto the tangent space \eqref{eq:tangent_space}. As shown in \Cref{sec:theory}, the continuous formulation will allow us to derive a modified backpropagation pass which uses only the individual small factors $C,\{U_i\}_i$ and that does not suffer from a slow convergence rate due to potential ill-conditioned tensor modes (see also \Cref{sec:robustness}). 

Let $f$ be a neural network and let $W$ be a weight tensor within $f$.
Consider the problem of minimizing the loss function $\mathcal L$ with respect to just  $W$ while keeping the other parameters fixed. This problem can be equivalently formulated as the gradient flow
\begin{equation}\label{eq:full_ode}
    \dot W(t) = -\nabla_W \mathcal L(W(t))
\end{equation}
where, for simplicity, we write the loss as a function of only $W$ and where ``dot'' denotes the time derivative. When $t\to\infty$,  the solution of \eqref{eq:full_ode} approaches the desired minimizer. Now, suppose we parametrize each tensor layer in a time-dependent Tucker form $W(t)=C(t)\times_{i=1}^d U_i(t) \in \mathcal M_\rho$. Then $\dot W(t) \in T_{W(t)}\mathcal M_\rho$, the tangent space of $\mathcal M_\rho$ at $W(t)$. Thus, \eqref{eq:full_ode} boils down to 
\begin{equation}\label{eq:projected_ode}
    \dot W(t) = -P(W(t))\nabla_W\mathcal L(W(t))
\end{equation}
where $P(W)$ denotes the orthogonal projection onto $T_{W}\mathcal{M}_{\rho}$. Using standard derivations from dynamical model order reduction literature \cite{Lubich_DTA}, the projected gradient flow in \eqref{eq:projected_ode} leads to the following evolution equations for the individual factors $C(t)$ and $U_i(t)$ 
\begin{align}
\begin{aligned}
\label{eq:projected_flow}
    &\dot U_i = -(I-U_iU_i^{\top})\Mat_i\!\big(\nabla_{W}\mathcal{L}(W) \times_{j\neq i}U_j^ \top\big)\Mat_i(C)^\dagger \\
    &\dot{C} = -\nabla_{W}\mathcal{L}(W) \times_{j=1}^d U_j^\top\, ,
\end{aligned}
\end{align}
where $\dagger$ denotes the pseudoinverse and where we omitted the dependence on $t$ for brevity. 
Even though \eqref{eq:projected_flow} describes the dynamics of the individual factors,  the equations for each factor are not fully decoupled. A direct integration of \eqref{eq:projected_flow} would still require taping the gradients $\nabla_W\mathcal L$ with respect to the full convolutional kernel  $W$. 
Moreover, the pseudoinverse of the matrices $\Mat_i(C)^\dagger$ adds a  stiffness term to the differential equation, making its numerical integration unstable. 
The presence of this stiff term is actually due to the intrinsic high-curvature of the manifold $\mathcal M_\rho$ and is well understood in the dynamic model order reduction community \cite{koch2007dynamical, lubich2014projector, kieri2016discretized, lubich2018time, ceruti2022unconventional, ceruti2022rank}. As observed in \cite{Schothoefer_2022}, an analogous term arises when looking at low-rank matrix parameterizations, and it is responsible for the issue of slow convergence of low-rank matrix training methods, which is observed in \cite{wang2021pufferfish, khodak2021initialization, Schothoefer_2022}.  

To overcome these issues, we use the following change of variables. 
Let $\Mat_i(C)^\top=Q_iS_i^\top$ be the QR decomposition of $\Mat_i(C)^\top$. Note that $S_i$ is a small square invertible matrix of size $r_i\times r_i$. Then, the matrix $K_i=U_iS_i$ has the same size as $U_i$ and spans the same vector space. However, the following key result holds for $K_i$.
\begin{theorem}\label{thm:formula_Ki}
Let $W=C\times_{i=1}^d U_i\in \mathcal M_\rho$ be such that \eqref{eq:projected_ode} holds. Let $\Mat_i(C)^\top=Q_iS_i^\top$ be the QR decomposition of $\Mat_i(C)^\top$ and let $K_i = U_iS_i$. Then,
\begin{align}
\begin{aligned}\label{eq:system_odes_Ki}
    &\dot K_i = -\nabla_{K_i}\mathcal L\big(\Ten_i(Q_i^\top)\times_{j\neq i}U_j\times_i K_i\big)\, ,\\
    &\dot C = -\nabla_C\mathcal L(C\times_{j=1}^d U_j)
\end{aligned}
\end{align}
where $\Ten_i$ denotes ``tensorization along mode $i$'', i.e.\ the inverse reshaping operation of $\Mat_i$.
\end{theorem}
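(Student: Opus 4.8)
The plan is to verify the two equations in \eqref{eq:system_odes_Ki} by a direct computation, rewriting the projected flow \eqref{eq:projected_flow} in the new variables $K_i = U_iS_i$ and $C$. Two algebraic facts do most of the work. First, the mode-$i$ unfolding of a Tucker tensor factorizes as $\Mat_i(C\times_{j=1}^d U_j)=U_i\,\Mat_i(C)\,(U_{\ne i})^\top$, where $U_{\ne i}=U_d\otimes\cdots\otimes U_{i+1}\otimes U_{i-1}\otimes\cdots\otimes U_1$ has orthonormal columns. Combined with $\Mat_i(C)=S_iQ_i^\top$ this gives $\Mat_i(W)=K_iQ_i^\top(U_{\ne i})^\top=\Mat_i\big(\Ten_i(Q_i^\top)\times_{j\ne i}U_j\times_i K_i\big)$, so the reparametrized tensor appearing on the right-hand side of \eqref{eq:system_odes_Ki} is exactly $W$ itself; in particular the two gradients are evaluated at the same point. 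Second, since $Q_i$ has orthonormal columns and $S_i$ is square and invertible, the pseudoinverse collapses to $\Mat_i(C)^\dagger=Q_iS_i^{-1}$.

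Next I would compute the two gradients by the chain rule, freezing all factors other than the one being differentiated. For the core, perturbing $C$ and using the unfolding identity gives $\nabla_C\mathcal L(C\times_{j=1}^d U_j)=\nabla_W\mathcal L(W)\times_{j=1}^d U_j^\top$, which is precisely the right-hand side of the second equation in \eqref{eq:projected_flow}; hence the $\dot C$ equation in \eqref{eq:system_odes_Ki} holds verbatim. For $K_i$, with $Q_i$ and the $U_j$ ($j\ne i$) held fixed we have $\Mat_i(W)=K_i\,B_i$ with $B_i=Q_i^\top(U_{\ne i})^\top$, so $\nabla_{K_i}\mathcal L=\Mat_i(\nabla_W\mathcal L)\,B_i^\top=\Mat_i\big(\nabla_W\mathcal L\times_{j\ne i}U_j^\top\big)Q_i=:G_i$.

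It then remains to show $\dot K_i=-G_i$. Writing $\dot K_i=\dot U_iS_i+U_i\dot S_i$ and substituting $\Mat_i(C)^\dagger=Q_iS_i^{-1}$ into the $\dot U_i$ equation of \eqref{eq:projected_flow} yields $\dot U_iS_i=-(I-U_iU_i^\top)G_i$, which supplies exactly the component of $-G_i$ orthogonal to the range of $U_i$. The parallel component must come from $U_i\dot S_i$: differentiating $S_i=\Mat_i(C)Q_i$ with $Q_i$ frozen gives $\dot S_i=\Mat_i(\dot C)Q_i=-U_i^\top\Mat_i(\nabla_W\mathcal L\times_{j\ne i}U_j^\top)Q_i=-U_i^\top G_i$, so $U_i\dot S_i=-U_iU_i^\top G_i$. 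Adding the two contributions, the $U_iU_i^\top G_i$ terms cancel and $\dot K_i=-G_i=-\nabla_{K_i}\mathcal L$, as claimed.

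I expect the delicate point to be the treatment of $\dot S_i$, i.e.\ the justification that the orthogonal QR factor $Q_i$ of the evolving core may be held fixed when differentiating $S_i$. This is exactly the content of the change of variables: $K_i$ is meant to carry the dynamics of the range of $U_i$, while $Q_i$ records only the (frozen) orientation used to define the compressed variable, so that the tangential and normal parts of the flow recombine into the single stiff-term-free equation \eqref{eq:system_odes_Ki}. Making this freezing convention precise---and checking that it is consistent with the gauge $U_i^\top\dot U_i=0$ built into \eqref{eq:projected_flow}---is the step that requires care; the remaining manipulations are routine unfolding algebra.
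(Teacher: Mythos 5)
Your proposal is correct and takes essentially the same approach as the paper: the same change of variables with the frozen factor $V_i^\top = Q_i^\top \bigotimes_{j\ne i} U_j^\top$, the same cancellation of the $(I-U_iU_i^\top)$ and $U_iU_i^\top$ components in $\dot K_i = \dot U_i S_i + U_i \dot S_i$, and the same chain-rule identification $\nabla_{K_i}\mathcal L = \Mat_i(\nabla_W\mathcal L)V_i$ after recognizing that $\Ten_i(Q_i^\top)\times_{j\neq i}U_j\times_i K_i = W$. The only cosmetic difference is that you compute $\dot S_i$ by differentiating $S_i=\Mat_i(C)Q_i$ with $Q_i$ frozen and substituting the known $\dot C$ equation, whereas the paper differentiates $S_i=U_i^\top\Mat_i(W)V_i$ using the gauge condition $\dot U_i^\top U_i=0$ together with $\dot V_i=0$; moreover, the delicate point you flag --- justifying the freezing of $Q_i$ --- is treated in the paper exactly as you anticipate, namely imposed as a convention ($\dot V_i=0$) following the MCTDH/dynamical low-rank literature rather than derived.
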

The supplementary material provides the proof in \Cref{sec:proof_thm_formula_Ki}.  The theorem above allows us to simplify  \eqref{eq:projected_flow}, obtaining a gradient flow that only depends on the small matrices $K_i$ and the small core tensor $C$. Moreover, it eliminates a stiffness term; this added regularity appears reasonable as no inversion is now involved in the differential equations. A rigorous regularity statement can be found in Theorem~\ref{theo_acc}.
We would like to underline the importance of the careful construction of $K_i$ to arrive at this theorem. A naive extension of \cite{Schothoefer_2022} to Tucker tensors can be constructed by a reshaping of $W$ into matrices $\Mat_i(W) = U_i S_i V_i^{\top}$ with $S_i = \Mat_i(C)$ and $V_i = \bigotimes_{j\neq i} U_j$. Then, $K_i = U_i S_i$ can be used to update $U_i$ into all directions $i\leq d$ which directly inherits the robustness properties presented in \cite{Schothoefer_2022}. However, this construction of $K$ yields a prohibitive memory footprint of $\mathcal{O}(n_i \Pi_{j \neq i} r_j)$ and computational costs of $O(n_i \Pi_{j \neq i} r_j^2)$ rendering the resulting method impractical. 



Based on the numerical integration of \eqref{eq:system_odes_Ki}, we propose a robust, memory-efficient, and rank-adaptive method to update the network parameters by using only the core tensor $C$ and the basis matrices $K_i$. The proposed approach is as follows: Fix an approximation tolerance $\tau >0$, then, first update the basis matrices performing for all $i=1,\dots,d$:
\begin{enumerate}[leftmargin=*,topsep=0pt,noitemsep]
    \item Form $K_i = U_iS_i$, where $S_i$ is the square $r_i\times r_i$ matrix from QR decomposition of $\Mat_i(C)^\top$
    \item Compute $K_i^{\text{new}}$ with one step integration of \eqref{eq:system_odes_Ki} starting from $K_i$ 
    \item Form the new augmented matrix $U_i^{\text{new}}$ by orthonormalizing the columns of $[U_i,  K_i^{\text{new}}]$ 
\end{enumerate}
Second, update the core tensor and truncate:
\begin{enumerate}[leftmargin=*,noitemsep,topsep=0pt,resume]
    \item Lift the core tensor  $\widetilde{C}= C \times_{i=1}^d (U_i^{\text{new}})^\top U_i $ using the new augmented basis matrices
    \item Compute $C^{\text{new}}$ with one step integration of \eqref{eq:system_odes_Ki} starting from  $\widetilde C$ using fixed basis matrices $U_i^{\text{new}}$
    \item Perform a rank adjustment step to the prescribed tolerance by computing the best Tucker approximation of $C^{\text{new}}$, i.e.\ solving the following optimization (rounding) task:
    \begin{align}
\begin{aligned}
\label{eq:rank_dynamics}
    \text{ Find} \ 
    C \in \mathcal{M}_{\leq 2\rho} \
     \text{of smallest rank $\rho'=(r_1',\dots,r_d')$} 
     \text{ such that} \quad
    \|C^{\text{new}} - C\|\leq \tau \|C^{\text{new}}\|,
\end{aligned}
\end{align}
where $\rho=(r_1,\dots,r_d)$ and $\mathcal{M}_{\leq 2\rho}$ denotes the set of tensors with component-wise Tucker rank lower than $2\rho$.
\end{enumerate}
In practice, the final rank adaptive step is done by performing a high-order SVD (HOSVD) \cite{de2000multilinear} on $C^{\text{new}}$. 
The parameter $\tau$ is responsible for the compression rate of the method, as larger values of $\tau$ yield smaller Tucker ranks and thus higher parameter reduction. 
 The computed tensor $C \in \mathcal{M}_{\rho'}$ has the form $C = C' \times_{i=1}^d U_i' \in \mathcal{M}_{\rho'}$ and the computed $U_i'\in \mathbb R^{2r_i \times r_i'}$ with $r_i'\leq 2r_i$ are then pulled back to the initial dimension by the change of basis $U_i=U_i^{\text{new}}U_i' \in \mathbb R^{n_i \times r_i'}$, and the new core tensor $C$ is then assigned $C'$.
 \color{black}
This implementation, however, comes at the expense of evaluating the network and gradient tapes $d+1$ times for an order $d$ tensor.

The next key result will overcome this issue and will allow us to reduce the necessary network and gradient tape evaluations to two.
\begin{corollary}\label{corr_gradient_trick}
Let $W=C\times_{i=1}^d U_i\in \mathcal M_\rho$ be such that (6) holds. 
Let $\Mat_i(C)^\top=Q_iS_i^\top$ be the QR decomposition of
$\Mat_i(C)^\top$ and let $K_i = U_iS_i$. Then,
\begin{align*}
    \mathrm{span}\left(\left[U_i,\dot K_i\right]\right) = \mathrm{span}\left(\left[U_i,\nabla_{U_i}\mathcal L\big(W\big)\right]\right).
\end{align*}
\end{corollary}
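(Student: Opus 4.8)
The plan is to reduce both $\dot K_i$ and $\nabla_{U_i}\mathcal L(W)$ to a single common matrix, up to right-multiplication by an invertible factor and up to additive terms already lying in $\mathrm{span}(U_i)$. Once this is done, the equality of the two augmented spans follows from two elementary facts about column spaces: right-multiplying a matrix by an invertible matrix leaves its column space unchanged, and appending to $[U_i,\cdot]$ any matrix whose columns lie in $\mathrm{span}(U_i)$ does not alter $\mathrm{span}([U_i,\cdot])$. Throughout I abbreviate $G_i := \Mat_i\big(\nabla_W\mathcal L(W)\times_{j\neq i}U_j^\top\big)$, the $n_i\times\prod_{j\neq i}r_j$ matrix appearing in the Euclidean gradient.

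First I would compute $\nabla_{U_i}\mathcal L(W)$ by the chain rule on the Tucker parametrization. Using the mode-$i$ unfolding identity $\Mat_i(W)=U_i\,\Mat_i(C)\,(\bigotimes_{j\neq i}U_j)^\top$ and viewing $\mathcal L$ as a function of $U_i$ alone, one obtains $\nabla_{U_i}\mathcal L(W)=G_i\,\Mat_i(C)^\top$. Substituting the QR factorization $\Mat_i(C)^\top=Q_iS_i^\top$ gives $\nabla_{U_i}\mathcal L(W)=G_iQ_iS_i^\top$, where $S_i$ is the invertible $r_i\times r_i$ triangular factor (invertible since $C$ has full mode-$i$ rank $r_i$).

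Next I would compute $\dot K_i$. The cleanest route is to evaluate the right-hand side of \eqref{eq:system_odes_Ki} from Theorem~\ref{thm:formula_Ki}: since $\Mat_i\big(\Ten_i(Q_i^\top)\big)=Q_i^\top$, the mode-$i$ unfolding of the argument of $\mathcal L$ is $K_iQ_i^\top(\bigotimes_{j\neq i}U_j)^\top$, and the same chain-rule computation in $K_i$ yields $\dot K_i=-G_iQ_i$. Alternatively, one may start from $K_i=U_iS_i$, write $\dot K_i=\dot U_iS_i+U_i\dot S_i$, insert the projected flow $\dot U_i=-(I-U_iU_i^\top)G_i\Mat_i(C)^\dagger$, and use the QR identity $\Mat_i(C)^\dagger=Q_iS_i^{-1}$; then both $U_i\dot S_i$ and the $U_iU_i^\top$ part of the projector lie in $\mathrm{span}(U_i)$, so $\dot K_i\equiv -G_iQ_i$ modulo $\mathrm{span}(U_i)$, which is all that is needed.

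Finally I would assemble the two computations. Since $S_i^\top$ is invertible, $\mathrm{col}(G_iQ_iS_i^\top)=\mathrm{col}(G_iQ_i)$, so $\mathrm{span}([U_i,\nabla_{U_i}\mathcal L(W)])=\mathrm{span}([U_i,G_iQ_i])$; and because $\dot K_i$ equals $-G_iQ_i$ up to columns in $\mathrm{span}(U_i)$, also $\mathrm{span}([U_i,\dot K_i])=\mathrm{span}([U_i,G_iQ_i])$, which gives the claim. The main obstacle is not conceptual but notational: correctly tracking the unfolding/Kronecker conventions so that the Euclidean gradient factors as $G_i\Mat_i(C)^\top$, and verifying both the pseudoinverse-to-QR simplification $\Mat_i(C)^\dagger=Q_iS_i^{-1}$ and the claim that the discarded terms genuinely lie in $\mathrm{span}(U_i)$.
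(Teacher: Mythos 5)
Your proposal is correct and follows essentially the same route as the paper's proof: both identify $\dot K_i=-\Mat_i(\nabla_W\mathcal L)V_i$ (your $-G_iQ_i$, since $V_i=(\bigotimes_{j\neq i}U_j)Q_i$) from the proof of Theorem~\ref{thm:formula_Ki}, derive $\nabla_{U_i}\mathcal L=\Mat_i(\nabla_W\mathcal L)V_iS_i^\top$ by the chain rule on the unfolding $\Mat_i(W)=U_i S_i V_i^\top$, and conclude from the relation $\nabla_{U_i}\mathcal L=-\dot K_iS_i^\top$ together with the invertibility of $S_i$. Your second, redundant derivation of $\dot K_i$ directly from the projected flow (via $\Mat_i(C)^\dagger=Q_iS_i^{-1}$, discarding terms in $\mathrm{span}(U_i)$) is also sound but adds nothing beyond the paper's argument.
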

The supplementary material provides the proof in \Cref{sec:proof_corr}. Using \Cref{corr_gradient_trick}, one can replace the individual forward evaluation and descend steps for $K_i$ by a single backpropagation. All available new information is given by the gradients $\nabla_{U_i}\mathcal L$, which can be evaluated from the same tape. \\
Combining the above strategy with Corollary~\ref{corr_gradient_trick} we obtain \Cref{alg_efficient_TDLRT}: an efficient rank-adaptive geometry-aware training method for tensor in Tucker format. Note that without the explicit computation of $K_i=U_iS_i$ we compute all basis gradients $\nabla_{U_i}\mathcal L$ in a single network evaluation and we use $\nabla_{U_i}\mathcal L$ to augment the basis. 
Note that stochastic gradient evaluations can be done in practice and that momentum methods are applicable for the descent step on line 5 of \Cref{alg_efficient_TDLRT}. In case the rank decreases after the retraction step, we only use the corresponding subset of the old basis functions to form the momentum term. In case of rank increase, the momentum term of the new basis vectors is initialized as zero. 
As a side note, the rank truncation proposed in \Cref{alg_efficient_TDLRT} allows to maintain the nice theoretical guarantees of the algorithm, but in practice any tensor-completion/factorization method such as \cite{pmlr-v202-ghadiri23a} could be used for this step.

\begin{algorithm*}[t]
\caption{\textbf{TDLRT}: Efficient Tensor Dynamical Low-Rank Training in Tucker format.}\label{alg_efficient_TDLRT}
\DontPrintSemicolon
\SetAlgoLined
\SetKwInOut{Input}{Input}
\SetKwComment{Comment}{$\triangleright$\ }{}

\Input{Initial low-rank factors $C\sim r_1\times \dots \times r_d$; $U_{i}\sim n_i\times r_i$;\\
{\tt adaptive}: Boolean flag that decides whether or not to dynamically update the ranks;\\
$\tau$: singular value threshold for the adaptive procedure.}
$G_i \gets \nabla_{U_i}\mathcal{L}(C\times_{i=1}^dU_i)$ \tcc*{Single-sweep  grad evaluation}
\For {each mode $i$}{ 
{
    $U_i^{\text{new}},\_\gets  \mathrm{QR}([U_i, G_i])$ \tcc*{Augmentation and orthonormalization}
}

}
$\widetilde{C}\gets C \times_{i=1}^d (U_i^{\text{new}})^\top U_i $\tcc*{Projection of Tucker core onto new basis}
$C  \gets$ descent step with direction  $  \nabla_{C}\mathcal{L}\big(\widetilde C \times_{i=1}^d {U}_{i}^{\text{new}}  \big)$
\vspace{.5em}

$(C,U_{1}, \dots,U_d) \gets$ Tucker decomposition of $C$ up to relative error $\tau$ as in \eqref{eq:rank_dynamics}\;
$U_{i}\gets  U_{i}^{\text{new}}U_{i}$\,, for $i=1,\dots, d$\tcc*[f]{Rank adjustment}
\

\end{algorithm*}

\subsection{Computational Complexity}

   The computational costs for the full network training come from back and forward passes through each layer. For a layer with weight tensor $W\in \mathbb R^{n_1\times \cdots \times n_d}$, they require $\mathcal{O}(b\prod_i n_i)$ operations, where $b$ is the batch size. When using TDLRT, these computational costs reduce to $\mathcal{O}(b\prod_i r_i + b\sum_i n_i r_i)$ operations, yielding a significant reduction in computational costs to determine the gradient. However, {performing low-rank updates} also {adds computational costs due to} several factorizations. Here, the QR and SVD on $\text{Mat}_i(C)$, which are needed in the updates of $U_i$ and the truncation step, require $\mathcal{O}(\sum_i r_i\prod_j r_j)$, and the QR on $K_i$ requires $\mathcal{O}(\sum_i n_i r_i^2)$ {operations}. Hence, in total, we have for every layer a cost of $\mathcal{O}( b\prod_i r_i + b\sum_i n_i r_i + \sum_{i=1}^d (n_i r_i^2 + r_i\prod_{j = 1}^d r_j)  )$ operations for TDLRT, vs. the  $\mathcal{O}(b\prod_i n_i)$ required by the full baseline. Thus, TDLRT scales linearly with the dimensions $n_i$, and for $r_i\ll n_i$, which is typically the case, see \Cref{appendix_rank_evo}, it has advantageous computational~cost.

\section{Convergence  and approximation analysis}\label{sec:theory}
In this section, we present our main theoretical results. First, we show \Cref{alg_efficient_TDLRT} guarantees descent of the training loss, provided the compression tolerance is not too large. Second, we show that when \Cref{alg_efficient_TDLRT} is implemented with SGD with a decaying learning rate, the method converges to a stationary point in expectation. Third, we prove that the compressed network computed via the rank-adaptive TDLRT scheme in  \Cref{alg_efficient_TDLRT} well-approximates the full model that one would obtain by standard training, provided the gradient flow of the loss is, at each step, approximately low-rank.  The latter result shows that if a high-performing subnetwork of low Tucker rank exists, then the proposed TLDRT will probably approximate it. 
For brevity, some statements here are formulated informally, and all proofs and details are deferred to \Cref{sec:proof_main_results}  in the SM. 

Suppose that for each convolution $W$, the gradient $\nabla_W \mathcal L$, as a function of $W$, is locally  bounded and  Lipschitz, i.e.,$\|\nabla_W\mathcal L(Y)\|\leq L_1$ and $\|\nabla_W\mathcal L(Y_1)-\nabla_W\mathcal L(Y_2)\|\leq L_2\|Y_1-Y_2\|$ around $W$. 
Then, 
\begin{theorem}[Descent]\label{theo_desc}
Let $W(\lambda) = C \times_{j=1}^d U_j$ be the Tucker low-rank tensor obtained after one training iteration using \Cref{alg_efficient_TDLRT} and let $W(0)$ be the previous point. Assuming the one-step integration from $0$ to $\lambda$ is done exactly, it holds $\mathcal L_W(W(\lambda))\leq \mathcal L_W(W(0))-\alpha\lambda +\beta\tau$, where $\alpha,\beta>0$ are constants independent of $\lambda$ and $\tau$, and where $\mathcal L_W$ denotes the loss as a function of only~$W$.
\end{theorem}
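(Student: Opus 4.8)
The plan is to prove loss descent by relating the output of Algorithm~\ref{alg_efficient_TDLRT} to the exact flow of the reparametrized system \eqref{eq:system_odes_Ki}, and then to account separately for the error introduced by the rank-truncation step \eqref{eq:rank_dynamics}. The key conceptual point, signalled in point~5 of the derivation outline, is that the algorithm is equivalent to integrating the gradient system along the tangent space and hence indirectly moves the weight along a \emph{straight line} on the manifold. Concretely, I would first argue that the augmentation step (orthonormalizing $[U_i, G_i]$, equivalently $[U_i, K_i^{\text{new}}]$) together with the core update produces, before truncation, a point $W^{\text{new}}$ that coincides with a single explicit-Euler step of the projected gradient flow \eqref{eq:projected_ode}. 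Because the projection $P(W)$ is an orthogonal projection onto $T_W\mathcal M_\rho$, the update direction $-P(W(0))\nabla_W\mathcal L(W(0))$ has nonnegative inner product with the true gradient, and in fact equals $-\|P(W(0))\nabla_W\mathcal L(W(0))\|^2$ when paired with the gradient.

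Next I would carry out the standard descent estimate along this straight line. Using the local $L_2$-Lipschitz continuity of $\nabla_W\mathcal L$, a first-order Taylor expansion with integral remainder along the segment from $W(0)$ to $W(\lambda)$ gives
\begin{align*}
\mathcal L_W(W(\lambda)) \leq \mathcal L_W(W(0)) - \lambda \norm{P(W(0))\nabla_W\mathcal L(W(0))}^2 + \tfrac{1}{2}L_2\lambda^2\norm{P(W(0))\nabla_W\mathcal L(W(0))}^2 .
\end{align*}
For $\lambda$ small enough (so that $\tfrac12 L_2\lambda \le \tfrac12$), the quadratic term is dominated by the linear one, yielding a clean bound of the form $\mathcal L_W(W(0)) - \alpha\lambda$ with $\alpha = \tfrac12\norm{P(W(0))\nabla_W\mathcal L(W(0))}^2 > 0$, a constant independent of $\lambda$ and $\tau$ as required. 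Here I would invoke the equivalence established via Theorem~\ref{thm:formula_Ki} and Corollary~\ref{corr_gradient_trick} to justify that the $K_i$-updates and the core update reconstruct precisely this projected-gradient Euler step, so the descent direction is genuinely $-P\nabla_W\mathcal L$ rather than some uncontrolled surrogate.

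The remaining ingredient is the truncation error. The point $W(\lambda)$ actually returned differs from the untruncated $W^{\text{new}}$ only through the HOSVD rounding in \eqref{eq:rank_dynamics}, which guarantees a relative error $\norm{W^{\text{new}} - W(\lambda)}\le \tau\norm{W^{\text{new}}}$. I would control the resulting change in loss by the local bound $L_1$ on $\nabla_W\mathcal L$: since $\mathcal L$ is $L_1$-Lipschitz around $W$, $\abs{\mathcal L_W(W(\lambda)) - \mathcal L_W(W^{\text{new}})}\le L_1\norm{W^{\text{new}}-W(\lambda)}\le L_1\tau\norm{W^{\text{new}}}$, and $\norm{W^{\text{new}}}$ is bounded uniformly in a neighborhood, giving an additive term $\beta\tau$ with $\beta>0$ independent of $\lambda$ and $\tau$. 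Combining the two estimates yields $\mathcal L_W(W(\lambda))\le \mathcal L_W(W(0)) - \alpha\lambda + \beta\tau$.

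I expect the main obstacle to be the first step: rigorously showing that the augmentation-plus-core-update scheme is \emph{exactly} a projected-gradient Euler step, so that the descent direction is the orthogonally projected gradient. The subtlety is that the basis augmentation uses $[U_i, U_i^{\text{new}}]$-spanning subspaces of possibly doubled rank, and one must verify that projecting onto these augmented subspaces and then performing the core descent reproduces the tangent-space projection $P(W(0))$ applied to $\nabla_W\mathcal L$, rather than a larger or misaligned projection. Establishing this equivalence cleanly—likely by unfolding the Tucker structure mode by mode and using that $\mathrm{span}([U_i, \nabla_{U_i}\mathcal L]) = \mathrm{span}([U_i,\dot K_i])$ from Corollary~\ref{corr_gradient_trick}—is where the real work lies; the two Taylor/Lipschitz estimates afterward are routine.
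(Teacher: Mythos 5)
Your proposal has a genuine gap at its central step, and it is exactly the one you flag at the end: the claimed identity between the augmentation-plus-core update and an explicit-Euler step of the projected flow $\dot W = -P(W)\nabla_W\mathcal L(W)$ at $W(0)$ does not hold, and it is not what \Cref{alg_efficient_TDLRT} computes. The core update is a Galerkin step in the \emph{flat} linear subspace $\mathcal V = \{\,Z \times_{i=1}^d P_{\widehat U_i}Z' \text{ form, i.e. } Z'\times_i \widehat U_i\,\}$ determined by the frozen augmented bases $\widehat U_i = \mathrm{orth}[U_i, G_i]$: the descent direction is $-\nabla_W\mathcal L \times_i \widehat U_i^{\top}$ lifted back by $\times_i \widehat U_i$, i.e.\ the multilinear projection $\times_i P_{\widehat U_i}$ of the gradient. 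This operator is genuinely different from the tangent-space projection $P(W(0))$ of \eqref{eq:tangent_space}, which is a sum of $d+1$ terms attached to the rank-$\rho$ manifold; neither contains the other's action on a general gradient, so anchoring the Taylor estimate to $-P(W(0))\nabla_W\mathcal L(W(0))$ proves descent for a step the algorithm does not take. \Cref{corr_gradient_trick} only guarantees equality of the \emph{spans} used for augmentation; it does not upgrade the core step to a projected-gradient Euler step, and no mode-by-mode unfolding will, because the equivalence is false as stated.

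The paper's proof sidesteps this entirely, and in a way your outline could adopt: it never invokes $P(W)$, but instead uses the theorem's hypothesis that the one-step integration is \emph{exact}. With the augmented bases frozen, the exact core flow satisfies $\tfrac{d}{dt}\mathcal L(\widehat W(t)) = -\|\nabla\mathcal L(\widehat W(t))\times_i \widehat U_i^{\top}\|^2 \le -\alpha$, with $\alpha$ the minimum of this projected-gradient norm along the trajectory; integrating over $[0,\lambda]$ gives $\mathcal L(\widehat W(\lambda)) \le \mathcal L(W(0)) - \alpha\lambda$, where $\widehat W(0)=W(0)$ holds precisely because $\mathrm{span}(U_i)\subseteq\mathrm{span}(\widehat U_i)$. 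This monotonicity argument needs no $L_2$-Taylor expansion and no smallness condition $\lambda \lesssim 1/L_2$ — a restriction your Euler-based estimate must impose but which the stated hypothesis (exact integration, not a discrete Euler step) does not. Your truncation estimate is fine and matches the paper's: Lipschitz/bounded-gradient control of the loss change under the HOSVD rounding yields the $\beta\tau$ term. If you replace your descent direction by $\nabla\mathcal L\times_i P_{\widehat U_i}$ and work in the flat augmented subspace containing $W(0)$, your argument collapses to the paper's proof; as written, the first step fails.
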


We now prove that the rank-adaptive training method in \Cref{alg_efficient_TDLRT}  converges to a stationary point in expectation if implemented with SGD and decaying learning rate.
\begin{theorem}[Convergence]\label{thm:convergence}
    Denote with $\widetilde{W}(t)$ is the weight tensor after  $t$ passes of \Cref{alg_efficient_TDLRT} before the rank truncation step, and $W(t)$ the one obtained after the rank truncation. Assume \Cref{alg_efficient_TDLRT} is implemented using SGD as a descent method with learning rate sequence $\{\lambda_t\}$ satisfying the Robbins-Monro conditions:
    \[
    \textstyle{\sum_t \lambda_t =+\infty \qquad \sum_t \lambda_t^2 < +\infty \, .}
    \]
    Suppose also that the spectral distribution stabilizes fast enough over time, i.e.,
    \begin{equation*}
    \textstyle{\sum_{t \geq 0} \mathbb{E}[\|\widetilde{W}(t)-W(t) \|] < + \infty}
    \end{equation*}
    and that the projected stochastic gradient has a controlled drift, namely
    \begin{equation*}
    \!\!\!\mathbb{E} \bigl[\| \nabla \mathcal{L}(t-1) \times_j P_{\widetilde{U}_j(t)} \|^2\,| \, t-1 \bigr] \leq \mu + \nu \|\nabla \mathcal{L}(t-1) \times_j P_{U_j(t-1)} \|^2
    \end{equation*}
    for some $\mu,\nu>0$,  where $P_U$ is the orthogonal projection onto the range of $U$. 
    Then, the following convergence condition holds
     \[
     \liminf_{t\to \infty} \mathbb{E}\| \nabla \mathcal{L}(t-1) \times_j P_{U_j(t-1)} \|^2 = 0
     \]
\end{theorem}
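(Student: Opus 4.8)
The plan is to adapt the classical Robbins--Monro stochastic-approximation argument to the manifold update of \Cref{alg_efficient_TDLRT}, treating the rank-truncation as a summable perturbation. I would build on the fact, established by \Cref{theo_desc} together with \Cref{thm:formula_Ki} and \Cref{corr_gradient_trick}, that one pass of the algorithm before truncation amounts to a step of length $\lambda_t$ along the projected stochastic gradient on the augmented subspace. Denoting by $g_t$ this stochastic descent direction and writing $\widetilde W(t)$ for the pre-truncation iterate, the $L_2$-smoothness of $\mathcal L$ yields the descent lemma
\[
\mathcal L(\widetilde W(t)) \le \mathcal L(W(t-1)) - \lambda_t \langle \nabla\mathcal L(W(t-1)), g_t\rangle + \tfrac{L_2}{2}\lambda_t^2 \|g_t\|^2 .
\]
Passing from $\widetilde W(t)$ to the truncated iterate $W(t)$ via the Lipschitz bound $|\mathcal L(W(t))-\mathcal L(\widetilde W(t))| \le L_1 \|W(t)-\widetilde W(t)\|$ isolates the truncation contribution as an additive error controlled by the spectral-stabilization hypothesis $\sum_t \mathbb E\|\widetilde W(t)-W(t)\|<\infty$.

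Next I would take the conditional expectation given the history up to step $t-1$. Using unbiasedness of the stochastic gradient together with the inclusion $\mathrm{span}(U_j(t-1))\subseteq \mathrm{span}(\widetilde U_j(t))$, the first-order term is bounded below, in conditional expectation, by $\|\nabla\mathcal L(t-1)\times_j P_{U_j(t-1)}\|^2$, which is exactly the stationarity measure in the statement. The second-order term $\tfrac{L_2}{2}\lambda_t^2\,\mathbb E[\|g_t\|^2\mid t-1]$ is precisely what the controlled-drift assumption bounds, giving $\tfrac{L_2}{2}\lambda_t^2(\mu + \nu\,\|\nabla\mathcal L(t-1)\times_j P_{U_j(t-1)}\|^2)$. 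Collecting terms, taking total expectations, and restricting to $t$ large enough that $1-\tfrac{L_2\nu}{2}\lambda_t \ge \tfrac12$, I obtain
\[
\tfrac{\lambda_t}{2}\,\mathbb E\|\nabla\mathcal L(t-1)\times_j P_{U_j(t-1)}\|^2 \le \mathbb E[\mathcal L(W(t-1))] - \mathbb E[\mathcal L(W(t))] + \tfrac{L_2\mu}{2}\lambda_t^2 + L_1\,\mathbb E\|\widetilde W(t)-W(t)\| .
\]

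Finally I would sum this inequality over $t$. The loss differences telescope and stay bounded because $\mathcal L$ is bounded below; the series $\sum_t \lambda_t^2$ is finite by the second Robbins--Monro condition; and $\sum_t \mathbb E\|\widetilde W(t)-W(t)\|$ is finite by the spectral-stabilization hypothesis. Hence $\sum_t \lambda_t\,\mathbb E\|\nabla\mathcal L(t-1)\times_j P_{U_j(t-1)}\|^2 < \infty$. Since $\sum_t \lambda_t = +\infty$ by the first Robbins--Monro condition, a standard contradiction argument (were the $\liminf$ some $\varepsilon>0$, the tail of the weighted series would diverge) forces the claimed $\liminf_{t\to\infty}\mathbb E\|\nabla\mathcal L(t-1)\times_j P_{U_j(t-1)}\|^2 = 0$.

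I expect the main obstacle to be the first step: justifying the clean one-step descent inequality for the full multi-stage update. The update augments each basis to the doubled rank using the \emph{random} stochastic gradient, descends in the lifted core, and only then truncates, so the effective direction $g_t$ lives on the random augmented subspace $\widetilde U(t)$ rather than on the fixed $U(t-1)$. Reconciling the two projections is exactly the role of the controlled-drift hypothesis, and care is needed so that the first-order term is measured on the smaller, deterministic subspace (where the stationarity conclusion is phrased) while the second moment is measured on the larger augmented subspace. A secondary technical point is upgrading the exact-integration descent estimate of \Cref{theo_desc} to the stochastic setting, ensuring that unbiasedness and the second-moment control interact correctly with the conditioning.
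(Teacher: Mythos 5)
Your overall architecture matches the paper's proof: descent lemma from $L_2$-smoothness, truncation absorbed as a summable perturbation via the Lipschitz bound and the assumption $\sum_t \mathbb{E}\|\widetilde W(t)-W(t)\|<\infty$, controlled drift to bound the second-moment term, then telescoping and Robbins--Monro. However, there is a genuine gap at the step you yourself flag as the main obstacle, and your proposed resolution is the wrong mechanism. You claim that ``unbiasedness of the stochastic gradient together with the inclusion $\mathrm{span}(U_j(t-1))\subseteq \mathrm{span}(\widetilde U_j(t))$'' lower-bounds the conditional expectation of the first-order term by $\|\nabla\mathcal L(W(t-1))\times_j P_{U_j(t-1)}\|^2$. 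This does not follow: the augmented projectors $P_{\widetilde U_j(t)}$ are \emph{correlated} with the batch gradient $\nabla\mathcal L_B(W(t-1))$, because by \Cref{corr_gradient_trick} the augmentation directions are built from that very batch gradient. Hence
\[
\mathbb{E}_t\bigl[\langle \nabla\mathcal L(W(t-1)),\, \nabla\mathcal L_B(W(t-1))\times_j P_{\widetilde U_j(t)}\rangle\bigr]
\]
cannot be evaluated by pulling the expectation onto $\nabla\mathcal L_B$ alone, and subspace inclusion gives no monotonicity here since the two gradients in the inner product differ. The paper's proof confronts exactly this: it writes $P_{\widetilde U_j(t)} = P_{U_j(t-1)} + P_{U_j(t)}$, expands the multilinear product into $2^d$ terms, identifies the pure ``old-basis'' term (where conditioning and unbiasedness do apply, yielding the stationarity measure), and then invokes the dedicated \Cref{lemma:Mixed_projectors} to show each of the remaining mixed terms has \emph{nonnegative} conditional expectation via an infimum argument over rank-constrained bases. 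Without this lemma (or a substitute), the cross terms could in principle be negative in expectation and destroy the descent inequality.

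Your fallback -- that ``reconciling the two projections is exactly the role of the controlled-drift hypothesis'' -- misattributes the hypothesis: in both your sketch and the paper, the controlled-drift assumption is consumed entirely by the second-order term $\tfrac{L_2}{2}\lambda_t^2\,\mathbb{E}_t\|\nabla\mathcal L_B\times_j P_{\widetilde U_j(t)}\|^2$, and contributes nothing to the sign of the first-order cross terms. So the missing ingredient is precisely the mixed-projector positivity lemma. Everything downstream of that point in your proposal is sound, and your restriction $1-\tfrac{L_2\nu}{2}\lambda_t\ge\tfrac12$ is in fact a cleaner handling of the step-size condition than the paper's.
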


Details of the proof are contained in the appendix.

\begin{theorem}\label{theo_acc}
For an integer $k$, let $t=k\lambda$, and let $W(t)$ be the full convolutional kernel, solution of \eqref{eq:full_ode} at time $t$. Let $C(t),\{U_i(t)\}_i$ be the Tucker core and factors computed after $k$ training steps with \Cref{alg:new_NN_KLS}, where the one-step integration from $0$ to $\lambda$ is done exactly. Finally, assume that for any $Y$ in a neighborhood of $W(t)$, the gradient flow $-\nabla \mathcal L_W(Y)$ is ``$\varepsilon$-close'' to $T_Y \mathcal M_\rho$. Then, 
\begin{equation}\label{eq:approx}
    \|W(t)-C(t)\times_{j=1}^d U_j(t)\|\leq  c_{1}\varepsilon + c_{2}\lambda +c_{3}\tau/\lambda
\end{equation}
where the constants $c_{1}$, $c_{2}$,  $c_{3}$ depend only on $L_{1}$ and $L_{2}$.
\end{theorem}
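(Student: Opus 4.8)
The plan is to adapt the robust error analysis developed for matrix and tensor dynamical low-rank integrators (as in \cite{kieri2016discretized, ceruti2022unconventional, lubich2018time}) to the present inexact, rank-adaptive setting. The key observation is that the total error decomposes into three independent contributions matching the three terms on the right-hand side of \eqref{eq:approx}: a \textbf{modeling error} measuring how far the true gradient flow lies from the tangent bundle of $\mathcal M_\rho$ (controlled by $\varepsilon$), a \textbf{one-step integration error} of the underlying explicit-Euler/SGD step (controlled by $\lambda$), and a \textbf{rounding error} from the HOSVD truncation in \eqref{eq:rank_dynamics}, accumulated over the $k = t/\lambda$ steps (yielding the $\tau/\lambda$ scaling). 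Crucially, the reparametrization of Theorem~\ref{thm:formula_Ki} removes the stiff pseudoinverse $\Mat_i(C)^\dagger$ from the dynamics, so all constants below can be taken to depend only on the bound $L_1$ and the Lipschitz constant $L_2$, and not on the smallest singular values of the core; this is precisely what makes the bound robust.

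First I would establish a \textbf{local error} (one-step consistency) estimate. I would prove an exactness property of the augmentation-based step: if the exact flow over $[0,\lambda]$ were to remain exactly on $\mathcal M_\rho$ (the case $\varepsilon = 0$) with no truncation applied, then one step of the scheme reproduces the exact solution up to $\mathcal{O}(\lambda^2)$, the standard explicit-Euler local error expressed through $L_1$ and $L_2$. The basis-augmentation step, which orthonormalizes $[U_i, K_i^{\text{new}}]$ for every mode $i$, guarantees that the updated basis spans all directions in which the flow moves to first order, so the resulting projected dynamics form a Galerkin approximation whose defect does not see the small singular values of $\Mat_i(C)$. Reintroducing the modeling defect, the $\varepsilon$-closeness of $-\nabla\mathcal L_W$ to $T_Y\mathcal M_\rho$ adds an $\mathcal{O}(\lambda\varepsilon)$ term to the one-step error, while the rounding step contributes a per-step error bounded by a constant times $\tau$.

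Next I would pass from local to \textbf{global error} by a discrete Gronwall argument. Writing the error after $k$ steps as a telescoping sum of propagated local errors, the Lipschitz continuity of the gradient (constant $L_2$) controls the amplification of errors from one step to the next, producing a stability factor of the form $e^{L_2 t}$, a constant depending only on $L_2$ and the fixed final time $t$. Summing the local contributions over $k = t/\lambda$ steps then gives $k\cdot\mathcal{O}(\lambda^2) = \mathcal{O}(\lambda)$ for the discretization part, $k\cdot\mathcal{O}(\tau) = \mathcal{O}(t\tau/\lambda)$ for the truncation part, and an $\mathcal{O}(\varepsilon)$ total for the modeling part; together these yield the claimed bound $c_1\varepsilon + c_2\lambda + c_3\tau/\lambda$ with constants depending only on $L_1$ and $L_2$.

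The hard part will be the robust local error estimate in the multi-mode Tucker setting: one must show that the consistency error of the augmented integrator is independent of the conditioning of the core, simultaneously across all $d$ modes and through the lifting and projection of the core tensor $\widetilde C = C\times_{i=1}^d (U_i^{\text{new}})^\top U_i$. This requires extending the matrix-case exactness argument of \cite{ceruti2022unconventional} to the tensorized maps $\Ten_i$ and verifying that the simultaneous augmentation of all mode bases does not reintroduce stiffness through the coupling between modes. Once this robust one-step bound is in place, the Gronwall accumulation and the bookkeeping of the three error sources are routine.
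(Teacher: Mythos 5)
Your overall architecture coincides with the paper's: a robust local error estimate that is independent of the small higher-order singular values, followed by a local-to-global accumulation via Gronwall and a Lady Windermere's fan argument \cite[\S II.3]{wanner1996solving}, with the three error sources bookkept exactly as you describe ($\mathcal{O}(\varepsilon)$ modeling error, $k$ local errors of size $\mathcal{O}(\lambda^2)$ summing to $\mathcal{O}(\lambda)$, and $k\tau = t\tau/\lambda$ from the per-step truncation). One conceptual misstep first: you attribute the $c_2\lambda$ term to ``the standard explicit-Euler local error,'' but the theorem explicitly assumes the one-step integration from $0$ to $\lambda$ is done \emph{exactly}, so there is no time-discretization error in this statement. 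In the paper's proof the $\mathcal{O}(\lambda)$ contribution arises instead from freezing the bases $U_j(\lambda)$ during the core update: setting $\widetilde C(t) = W(t)\times_{j=1}^d U_j(\lambda)^\top$ and writing $W(t) = \widetilde C(t)\times_{j=1}^d U_j(\lambda) + R(t)$, the remainder satisfies $\lVert R(t)\rVert \leq 2L_1\lambda + 2\Theta$, which by Lipschitz continuity of the gradient induces a defect $\lVert D(t)\rVert \leq 2L_2(L_1\lambda + \Theta)$ between the frozen-basis Galerkin flow and the algorithm's core flow; Gronwall then yields a local error of size $\mathcal{O}(\lambda(\lambda+\varepsilon))$. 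Your Galerkin-defect intuition points in the right direction, but the Euler attribution contradicts the hypothesis you are working under.

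The genuine gap is that you correctly single out the robust multi-mode local estimate as ``the hard part'' and then defer it (``this requires extending the matrix-case exactness argument\dots'') rather than supplying it. That step is precisely where the paper does its actual work: Lemma~\ref{le:lemma1} proves $\lVert W(\lambda) - W(\lambda)\times_{j=1}^d U_j(\lambda)U_j(\lambda)^\top\rVert \leq \Theta = \mathcal{O}(\lambda(\lambda+\varepsilon))$ by reducing to the matrix-case bound of \cite[Appendix]{Schothoefer_2022} one mode at a time --- each unfolding $\Mat_j(W)$ carries the SVD-like form $U_j S_j V_j^\top$ from the proof of Theorem~\ref{thm:formula_Ki}, so the matrix result applies per mode --- and then telescoping over the $d$ modes, where the cross-mode coupling you worry about is handled simply because the multilinear projections $\times_j U_jU_j^\top$ are non-expansive, so each mode's error $\theta$ adds without amplification. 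Your assertion that ``the simultaneous augmentation of all mode bases does not reintroduce stiffness through the coupling between modes'' is exactly the claim that must be established, not assumed; without this lemma (or an equivalent), your plan is a correct outline of the known BUG-type analysis \cite{kieri2016discretized, ceruti2022unconventional} but not a proof of the stated bound.
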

 In particular, both bounds in the above theorems do not depend on the higher-order singular values of the exact nor the approximate solution, which shows that the method does not suffer instability and slow convergence rate due to potential ill-conditioning (small higher-order singular values). Note that this result is crucial for efficient training on the low-rank manifold and is not shared by direct gradient descent training approaches, as we will numerically demonstrate in the following section.  Moreover, we emphasize that \eqref{eq:approx} provides a sufficient condition for the computation of a high-performing subnetwork. In fact, for smooth enough network models $f$, condition \eqref{eq:approx} implies that $f(W(t))\approx f(C(t)\times_{j=1}^d U_j(t))$, i.e.\ the computed subnetwork approximates the full model.

\section{Experiments}\label{sec:experiments}

\begin{figure*}[!t]
\centering     
\subfigure[Alexnet Cifar10]{\label{fig:a}\includegraphics[width=.245\textwidth]{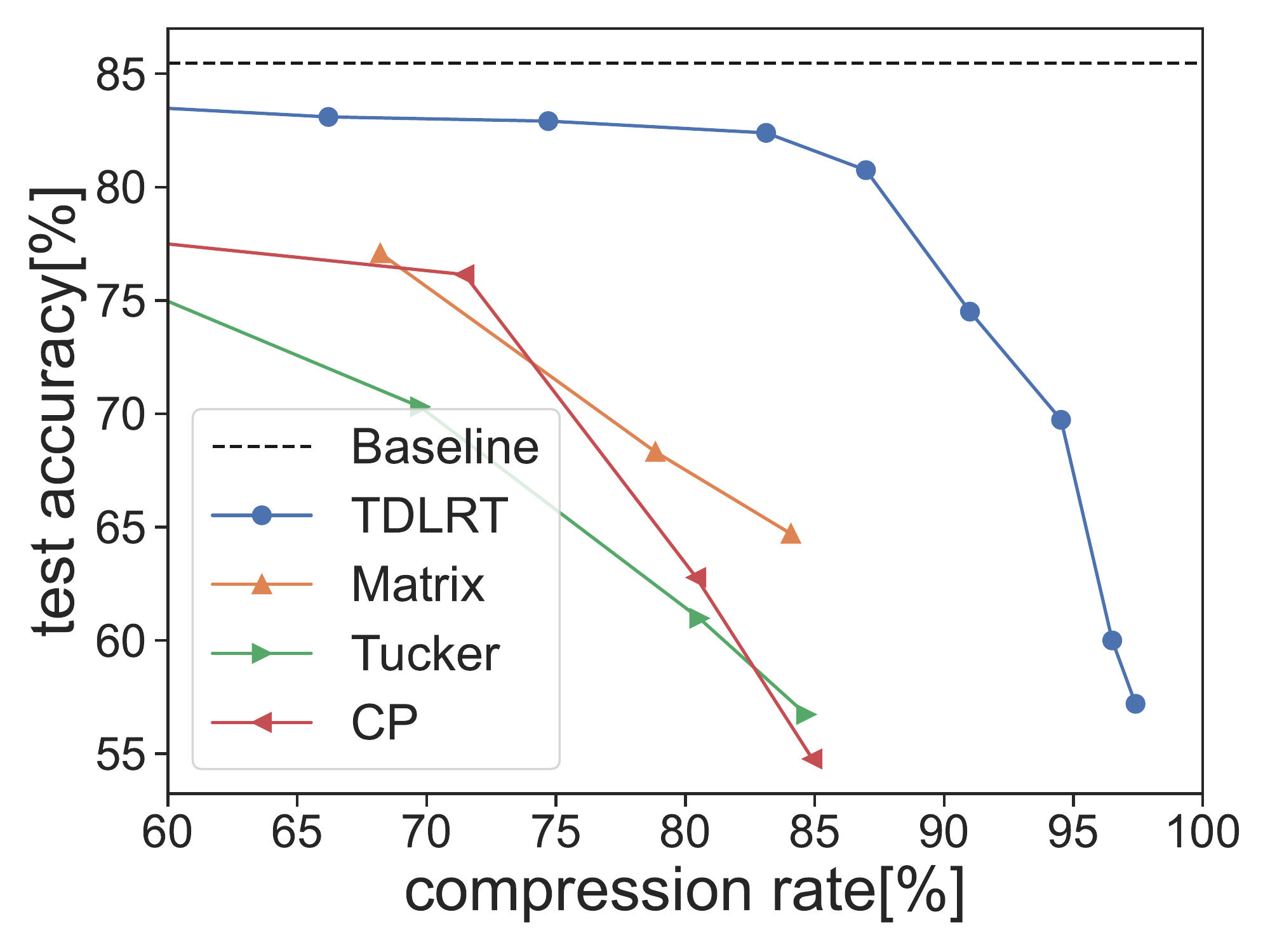}}
\subfigure[VGG16 Cifar10]{\label{fig:b}\includegraphics[width=.245\textwidth]{ 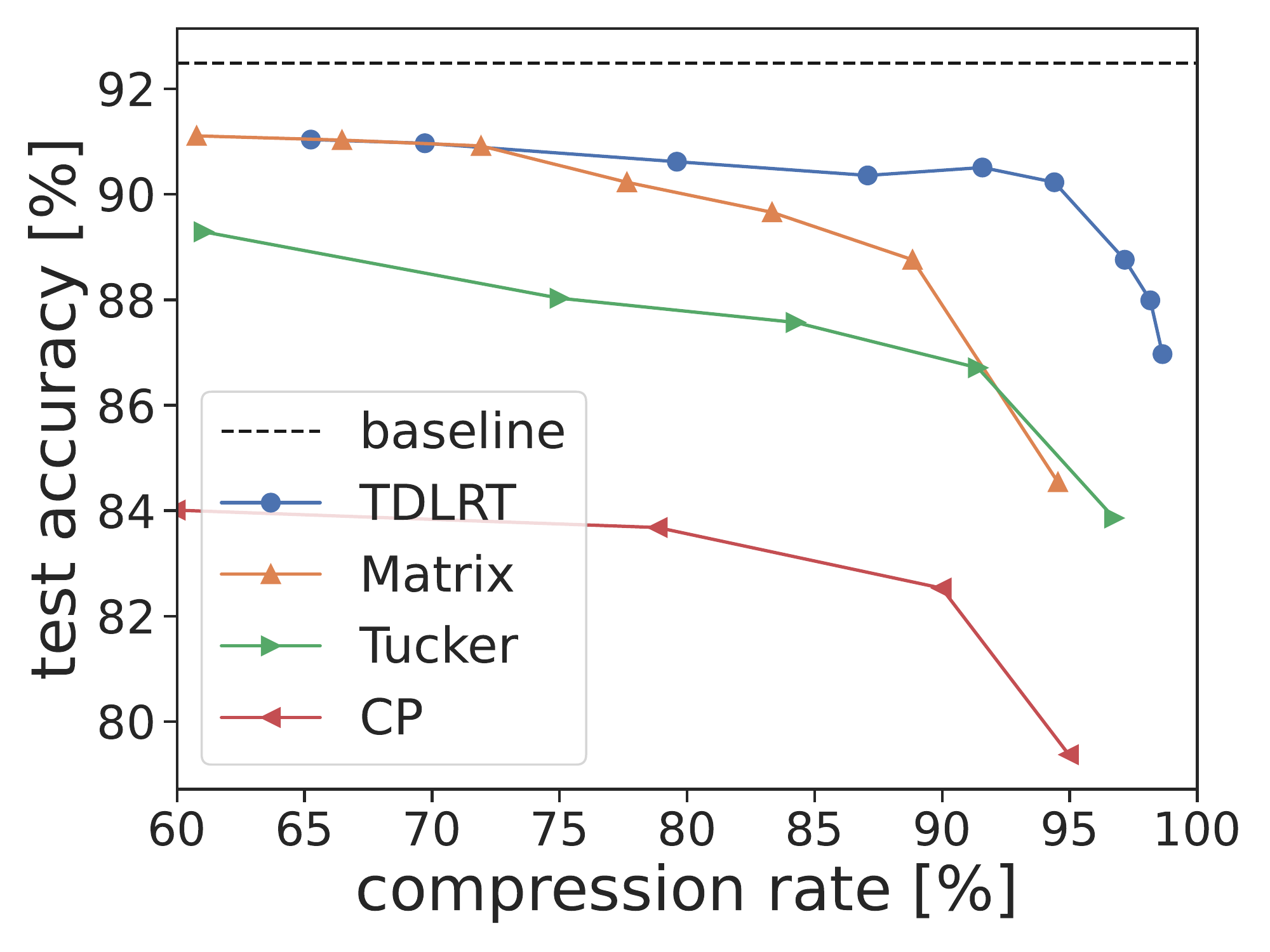}}
\subfigure[ResNet18 Cifar10]{\label{fig:c}\includegraphics[width=.245\textwidth]{ 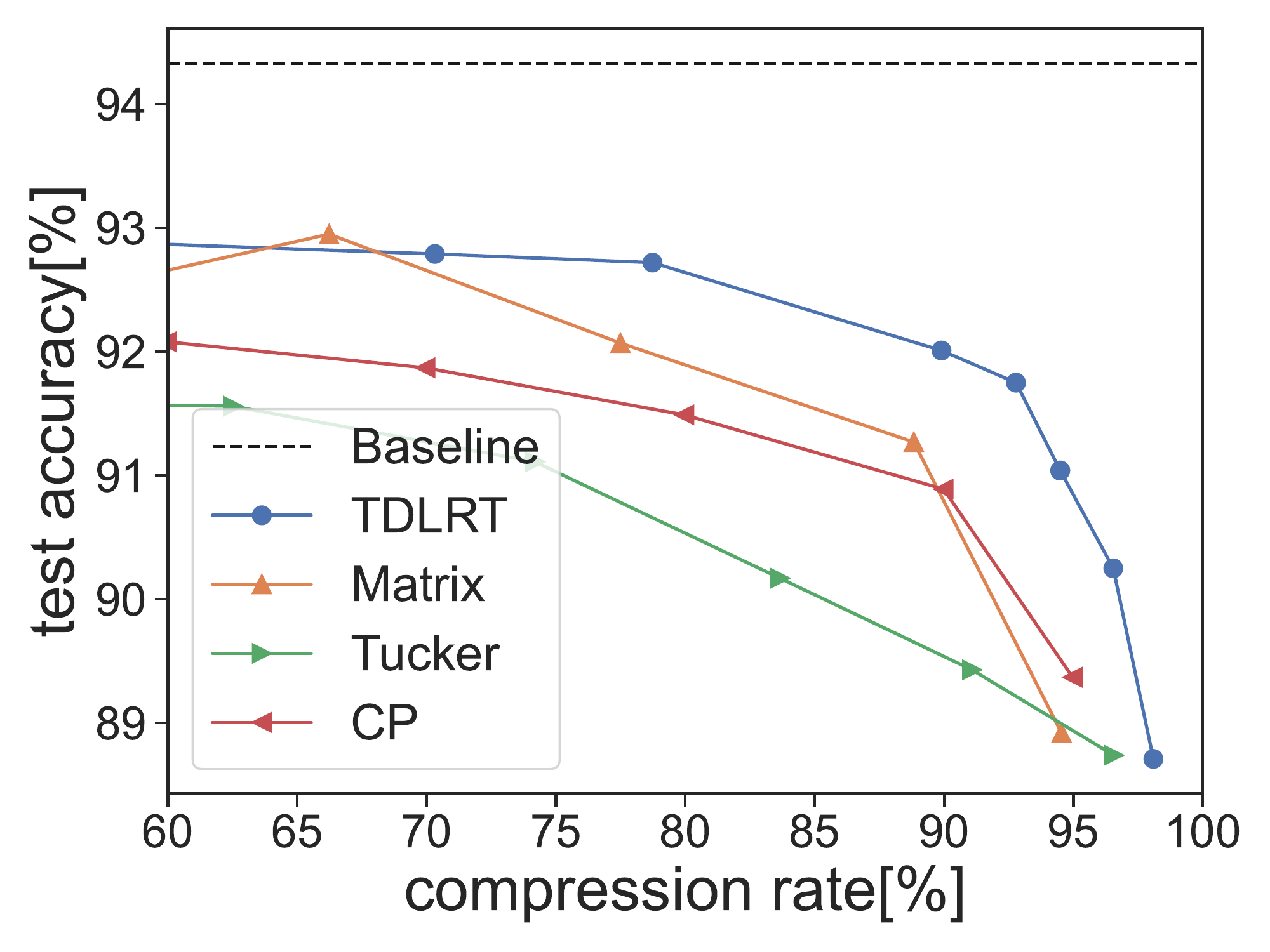}}
\subfigure[\small{ResNet18~Tiny-Imagenet}]{\label{fig:d}\includegraphics[width=.245\textwidth,]{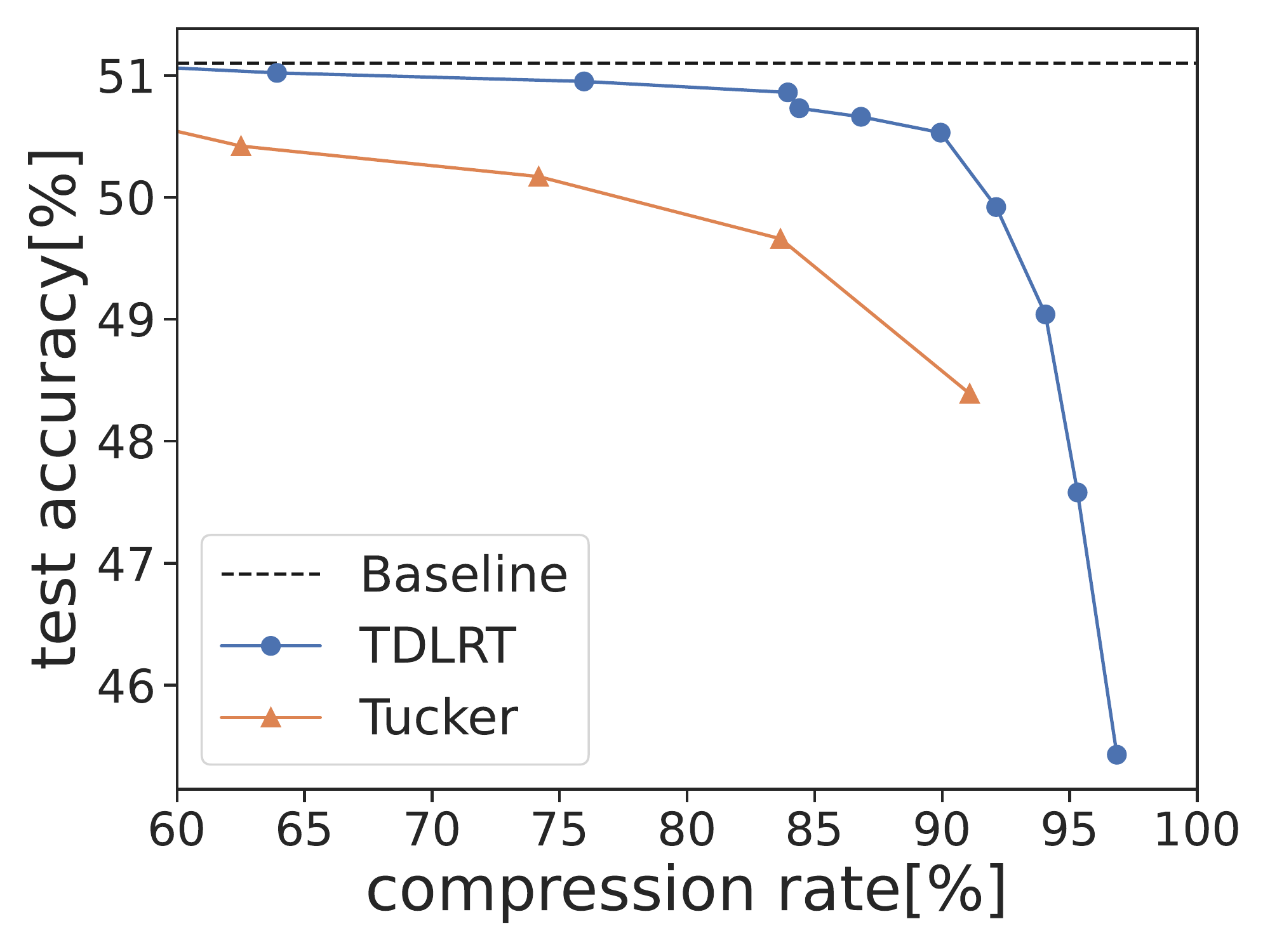}}
\caption{Comparison of compression performance for different models against the full baseline for the Cifar10 (a-c) and Tiny-Imagenet (d) benchmark. The mean accuracy of $20$ weight initializations is displayed.
TDLRT achieves higher compression rates at higher accuracy with lower variance between initializations.}
\label{fig_tdlrt_compression}
\end{figure*}

In the following, we conduct a series of experiments to evaluate the performance of the proposed method as compared to the full model and to standard layer factorization and model pruning baselines.\\
The full baseline is the network trained via standard implementation. In order to test the method on tensor layers, we consider here convolutional networks and apply the decomposition to the convolutional kernel $W$. In terms of layer factorization, we compare against different baseline approaches:  direct training of the factors in the low-rank matrix factorization format \cite{Idelbayev_2020_CVPR, Li_basis_CNN,wang2021pufferfish,khodak2021initialization}, the low-rank tensor Canonic-Polyadic format \cite{lebedev2015speedingup, Song_CP, Astrid_Powermethod, Phan2020StableLT}, the low-rank tensor Tucker format \cite{kossaifi_tnet, kim2016compression}, and the low-rank Tensor-Train format \cite{novikov2015tensorizing}. The convolution operation can then be written completely in terms of the small factors using the factorization ansatz. While the above papers propose different initialization and rank selection strategies, all the referenced literature trains the factors in the chosen layer factorization format by implementing forward and backward propagations simultaneously and independently on each factor in a block-coordinate fashion.  This way of training on the low-rank manifold ignores the geometry of the manifold, whereas TDLRT directly exploits the underlying geometry to avoid points of high curvature. We compare the training strategy alone. Thus, we ignore any model-specific initialization and regularization addition. 
We also compare with Riemannian gradient descent (RGD) for tensors in Tucker format implemented using the HOSVD retraction \cite{vandereycken2013low,de2000best} and with the matrix dynamical training algorithm \cite{Schothoefer_2022}, where the standard forward and backward passes are endowed with a rank-adaptive QR projection step, similar to the proposed Algorithm~\ref{alg_efficient_TDLRT}. In terms of pruning techniques based on sparsification, we compare with methods from two of the most popular strategies: iterative magnitude pruning (IMP)~\cite{frankle2018lottery}, and single-shot pruning at initialization,  single-shot network pruning (SNIP)~\cite{lee2019snip} and Gradient Signal Preservation (GraSP)~\cite{wang2020picking}.
The experiments are performed on an Nvidia RTX3090, Nvidia RTX3070 and one Nvidia A100 80GB. The code is available in the supplementary material.

\subsection{Compression Performance}\label{sec:comp_perf}

The compression performance of TDLRT is evaluated on  CIFAR10 and tiny-imagenet. The typical data augmentation procedure is employed for this dataset: a composition of standardization, random cropping, and a random horizontal flip. All methods are trained using a batch size of $128$ for $70$ epochs each, 
as done in \cite{wang2021pufferfish, khodak2021initialization}. All the baseline methods are trained with the SGD optimizer; the starting learning rate of $0.05$ is reduced by a factor of $10$ on plateaus, and momentum is chosen as $0.1$ for all layers. The rank $\hat{r}$ of each tensor mode for the fixed-rank baseline methods is determined by a parameter $\kappa$, i.e., we set $\hat{r}=\kappa r_{\mathrm{max}}$.
The proposed TDLRT method employs \Cref{alg:new_NN_KLS}, where SGD is used for the descent steps at lines 4 and 9, with momentum and learning rate as above.  Dynamic compression during training is governed by the singular value threshold $\tau$, see \Cref{eq:rank_dynamics}.

\begin{figure*}[th!]
    \includegraphics[height =3.0cm ]{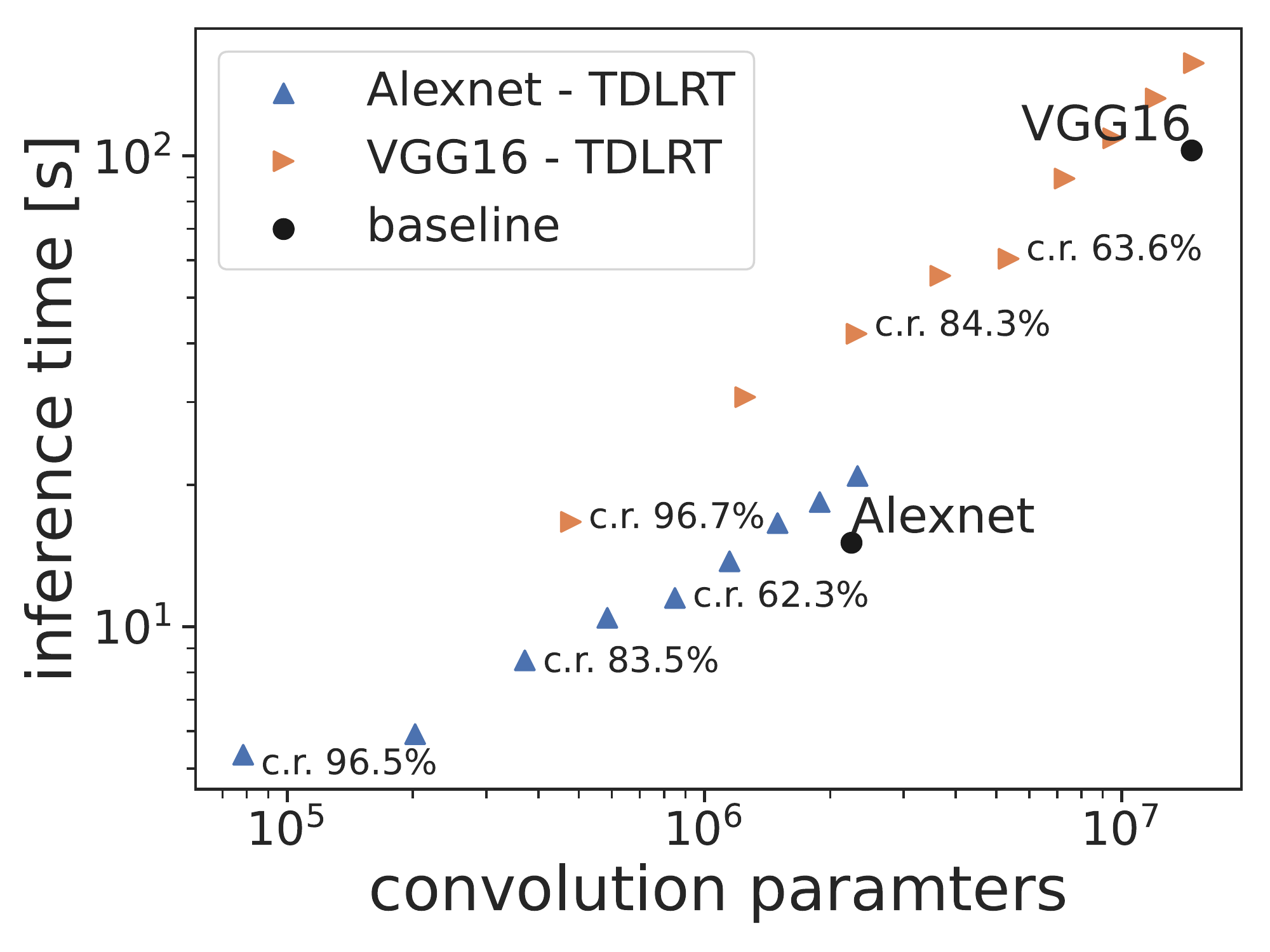}\includegraphics[height=3cm]{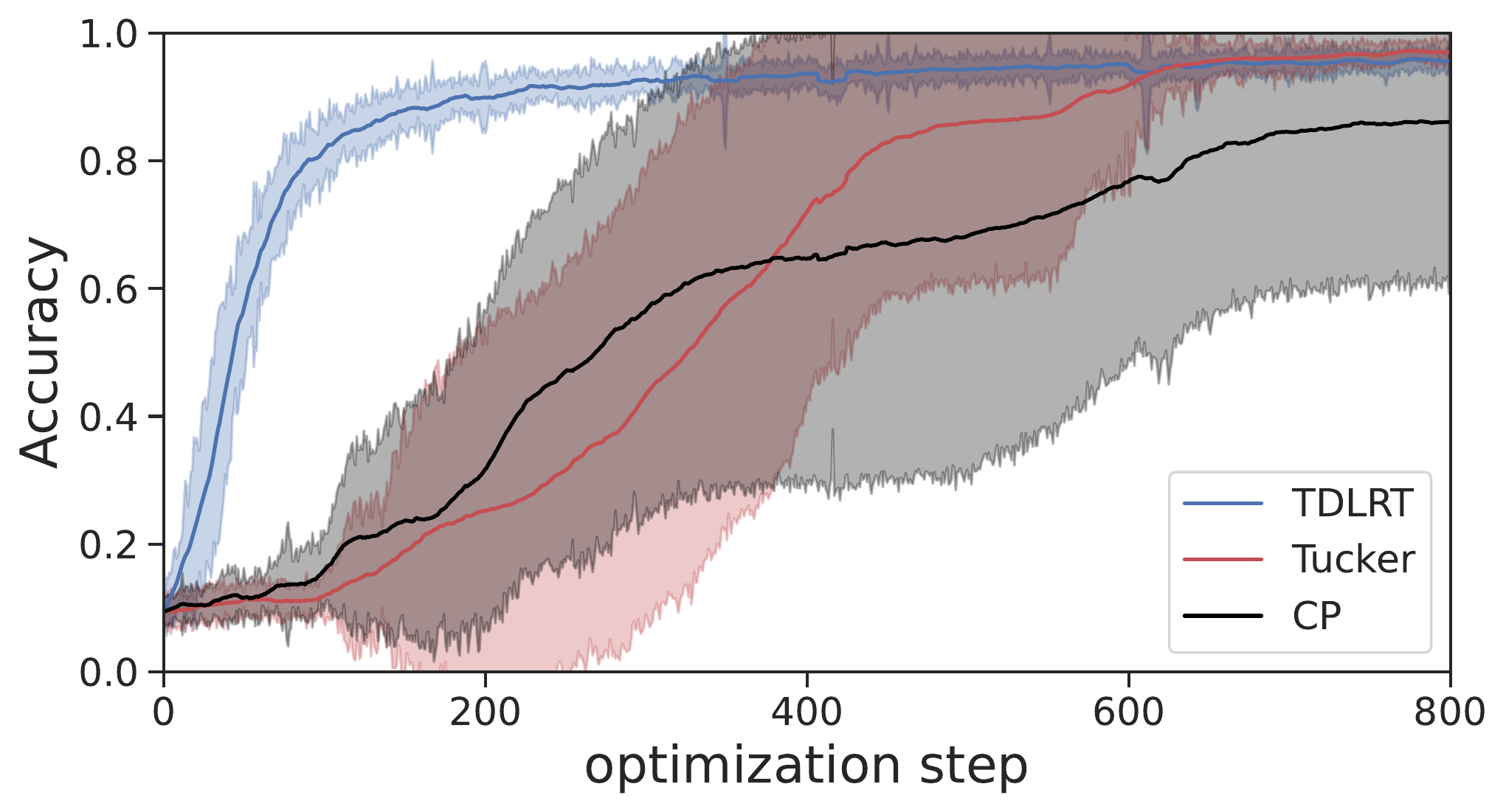}
\begin{minipage}{0.25\textwidth}\vspace{-3cm}
              \resizebox{\textwidth}{!}
{
    \begin{NiceTabular}{l|c}
    \toprule
    Method &  time to $60\%$ acc.\\ 
    \midrule
    Tucker (best case)    & $2.35$s\\ 
   Tucker (worst case)     & $4.70$s\\ 
   TDLRT (fixed rank)    & $2.41$s\\ 
    TDLRT & $4.16$s\\ 
    \bottomrule
        \end{NiceTabular}
}

\end{minipage}
    \caption{Left panel: Computational footprint of low-rank convolutions. TDLRT surpasses the baseline performance for meaningful compression rates. Middle panel: Convergence behavior of Lenet5 on MNIST dataset in the case of an initial overestimation of the rank, with exponentially decaying singular values. Mean and standard deviation (shaded area) over $10$ random initializations. Right panel: Compared to standard Tucker decomposition, with and without rank adaption, wall training time to reach $60\%$ accuracy for TDLRT.} 
    \label{fig:time_convergence_combined}
\end{figure*}
\Cref{fig_tdlrt_compression} (a-c) shows the mean accuracy of TDLRT as compared to competing factorization baselines.
TDLRT achieves higher compression rates at higher accuracy with lower variance between weight initializations than the competing methods. In the case of the VGG16 benchmark, TDLRT is able to maintain baseline accuracy for compression rates over $90\%$ and exceeds the baseline on average for $\tau=0.03$, i.e., $95.3\%$ compression. Alexnet has $16.8\%$ of the parameters of VGG16. Thus, compression is naturally more challenging to achieve. Nevertheless, TDLRT outperforms the baselines and remains close to the full network performance. Similar behavior is observed~on~ResNet18.

\Cref{tab_cifar} shows a comparison of the best-performing compression between all the factorization-based and pruning-based baseline methods as well as TDLRT in the CIFAR10 benchmark for Alexnet, ResNet18, and VGG16. In the proposed evaluation, TDLRT is on par or outperforms all the alternatives, including pruning based on sparsity (implemented without warmup for the sake of a fair comparison), Tensor-Train (TT) and Tucker factorization, Riemmanian gradient descend (RGD) for Tucker decompositions, as well as the matrix-valued DLRT,  due to the higher flexibility of the Tucker format where compression along each tensor mode individually is possible. The compression rate (c.r.) is computed as $1-c/f$, where $c$ is the number of convolutional parameters in the compressed model after training and $f$ is the number of convolutional parameters of the full model. While this is the compression rate after training, we emphasize that methods based on factorizations yield an analogous compression rate during the entire training process. We also remark that no DLRT version of CP decomposition is shown as CP is not suited for dynamical low-rank training due to its lack of a manifold structure. Similar results are obtained for the tiny-imagenet benchmark, see Fig.~\ref{fig_tdlrt_compression} (d) and Table~\ref{fig_tiny_imgnet}. The rank evolution of the networks during training is discussed in \Cref{appendix_rank_evo}. 

\begin{table}[t]
\caption{Comparison of the best-performing compression rates for different methods on the CIFAR10 benchmark with Alexnet, VGG16, and ResNet18. For each column, we report first and second best results.}
\resizebox{1\textwidth}{!}
 {
\begin{NiceTabular}{ll| cc cc cc}
\toprule   
& &\multicolumn{2}{c}{VGG16}  & \multicolumn{2}{c}{Alexnet} & \multicolumn{2}{c}{ResNet18}\\
 \cmidrule{3-8}
& & test acc. [\%] & c.r. [\%] &  test acc. [\%] &c.r. [\%] & test acc. [\%] & c.r. [\%] \\ 
 \midrule
& Baseline &$92.01$ &$0.0$  &$85.46$ &$0.0$  &$94.33$ & $0.0$ \\
\midrule
\multirow{7}{*}{\rotatebox[origin=c]{90}{Factorization}} &\textbf{TDLRT} (ours)  &  $\first{90.23}$ &$\first{94.40}$  &   $\first{82.39}$  & $\second{83.12}$ & $\second{92.72}$ & $78.73$  \\ 
& Matrix DLRT \cite{Schothoefer_2022}& $89.13 $ & $83.22$ & $73.57$ & $71.57$ &  $80.98$ &$56.85$ \\ 
&  Tucker-factorized \cite{kim2016compression} & $86.71 $ & $91.4$ & $ 70.30$ & $69.74 $ & $ 91.11$ & $74.19$\\
&  Matrix-factorized \cite{wang2021pufferfish} &  $84.54 $ & $\second{94.34}$ & $77.07 $ & { $68.20 $} &$92.07$  &{$77.49$}\\
&  CP-factorized \cite{lebedev2015speedingup} & $82.53$ & {$89.98$} & $ 76.14$& {$ 71.46$} & $ 91.87$& {$ 69.95$} \\ 
 &Tucker RGD \cite{vandereycken2013low} & 81.48 & 84.26 & 73.88 & 74.01 & \first{92.76} & 74.18 \\
  &      TT-factorized \cite{novikov2015tensorizing} & 87.27 & 90.30 & \second{78.13} & \first{88.14} & 87.13 & \second{81.24} \\
 \midrule
\multirow{3}{*}{\rotatebox[origin=c]{90}{Pruning}} & {SNIP} \cite{lee2019snip} & $\second{89.58}$ &{$56.23$}  &$-$ &{$-$} & $89.50$ & $78.50$ \\
& IMP \cite{frankle2018lottery} & $87.21$ &{$58.54$}  &$-$ &{$-$} & $90.50$ & $\first{82.50}$ \\
& GraSP \cite{wang2020picking} & $88.50$ &{$77.30$}  &$-$ &{$-$} & $89.40$ & $77.90$ \\
 \bottomrule
\end{NiceTabular}
}
\label{tab_cifar}

\end{table}
 
\subsection{Robustness of the Optimization}\label{sec:robustness}
To further highlight the advantages of \Cref{alg:new_NN_KLS} as compared to standard simultaneous gradient descent on the factors of the decomposition, we show in \Cref{fig:time_convergence_combined} the accuracy history of LeNet5 on MNIST using TDLRT as compared to standard training on Tucker and CP decompositions. In the case of TDLRT, an optimization step denotes the evaluation of \Cref{alg:new_NN_KLS} for all convolutional layers for one batch of training data, while for the other methods, we refer to a standard SGD batch update for all factors of the tensor decompositions of all layers. All linear layers of the network are trained with a traditional gradient descent update and are not compressed. 
In this experiment, we initialize the network weights to simulate a scenario where the rank is overestimated. To this end, we employ spectral initialization with singular values decaying exponentially with powers of ten.
Integrating the low-rank gradient flow with the TDLRT \Cref{alg:new_NN_KLS} leads to faster and more robust convergence rates of the network training process.

\subsection{Computational Performance}\label{sec_comp_cost}

The computational performance in inference and training of convolutional layers in Tucker decomposition depends on their current tensor ranks, see \Cref{sec_tucker_conv}. We evaluate the inference time of $120K$ RGB images and memory footprint of VGG and AlexNet in Tucker factorization as used in \Cref{alg:new_NN_KLS} and compare them to the non-factorized baseline models in  \Cref{fig:time_convergence_combined}. As a result, for realistic compression rates, see also \Cref{fig_tdlrt_compression}, the computational footprint of TDLRT is significantly lower than the corresponding baseline model.\\
Rank adaptive TDLRT training comes at the additional expense of QR and SVD operations per optimization step compared to standard fixed rank training without orthonormalization. However, the increased robustness of the optimization and faster convergence reduces the computational overhead of TDLRT as demonstrated in \Cref{fig:time_convergence_combined}. In order to provide a fair comparison between the methods, we report the time to achieve $60\%$ accuracy target at a compression rate of $90\%$ for all methods, on LeNet5 MNIST with the setting as in \Cref{sec:robustness}. {We refrain from measuring time to convergence since the standard Tucker decomposition is not able to reach similar accuracy levels at the same compression ratio as TDLRT.}

\subsection{Fine-tuning with LoRA-like low-rank adapters}

In this section, we are presenting another application of our method, namely fine-tuning pre-trained models through the use of low-rank adapters. In particular, our approach \Cref{alg_efficient_TDLRT} is completely agnostic between model compression and adaptation, the approach is the same. More precisely, given a pre-trained model $f_{W^*}$ with tensor pre-trained weights $W^*$, it is possible to add a low-rank Tucker correction $W$. Then, given a task loss $L(f)$, by defining $\mathcal L(W) = L(f_{W^*+W})$ we report ourselves to the original formulation of the problem. Moreover, we would like to stress that this approach can be applied also to matrices, which are a particular case of tensors with $d = 2$ modes. 
To showcase these two settings, we present two different settings in which we test our method. In \Cref{tab:fine_tuning} (left) we show the fine-tuning of Deberta V3 \cite{he2023debertav3improvingdebertausing} on the GLUE benchmark \cite{wang2019gluemultitaskbenchmarkanalysis}. In this test case here, the low-rank adapters have been applied to all matrices in attention layers, and the final performance against LoRA \cite{hu2022lora} fine-tuning is reported. Apart from our additional hyperparameter $\tau$, all the other hyperparameters had been kept as reported in \cite{zhang2023adalora}.\\
In \Cref{tab:fine_tuning} (right), we report the results for fine-tuning stable diffusion \cite{rombach2021highresolution} with Dreambooth \cite{ruiz2023dreamboothfinetuningtexttoimage}. To adapt the model, we applied Tucker tensor corrections to each convolution of the Unet, and a matrix adapter to each attention layer of the text encoder network. We applied our method on all these correctors, while keeping the same hyperparameters setting as in \cite{peft}. We would like to observe that the kind of adapters they propose for convolutions consist in a matrix factorization of a reshaping of the convolutional tensor. This results in a potentially bigger number of parameters needed, as a matrix factorization would be $O((Cd_1d_2+F)r)$ against a $O(Cr_1+F r_2+d_1 r_3+d_2 r_4 + r_1r_2r_3r_4)$ for a plain Tucker factorization, where $F$ is the number of output features, $C$ is the number of input channels, $d_1$ and $d_2$ are the spatial dimensions of the convolutional kernel. This observation is in fact reflected in the numbers in \Cref{tab:fine_tuning}, in which we can observe a higher compression potential for tensor decompositions compared to matrix ones. 

\begin{table}[h]
\centering
\caption{Fine-tuning performance metrics on Deberta V3 Glue benchmark (left) and on Stable diffusion Dreambooth (right).}
\begin{minipage}{0.45\textwidth}
    \setlength{\tabcolsep}{3pt} 
\renewcommand{\arraystretch}{1.} 
\begin{NiceTabular}{c|c|c}
\toprule   
 \textbf{GLUE} & \textbf{LoRA} & \textbf{TDRLT}(Ours) \\
 \# params & 1.33M (rank 8) & 0.9M ($\tau=0.15$)\\
 \hline
CoLa (Corr.) & $0.6759$ &$0.7065$ \\
MRPC (Acc.) & $0.8971$  & $0.9052$ \\
QQP (Acc.) & $0.9131$ & $ 0.9215$\\
RTE (Acc.) & $0.8535$ & $0.8713$ \\
SST2 (Acc.) & $0.9484$  & $0.9594$ \\
 \bottomrule
\end{NiceTabular}

\end{minipage}
\hfill
\begin{minipage}{0.45\textwidth}
    \setlength{\tabcolsep}{3pt} 
\renewcommand{\arraystretch}{1.0} 
\begin{NiceTabular}{c|c|c}
\toprule   
        \textbf{method} & \textbf{loss} & \textbf{\# params} \\
        \hline
        LoRA ($r = 8$) & $0.260$ &$5$ M \\
        LoRA ($r = 5$) & $0.269$ &$3$ M \\
        LoRA ($r = 3$) & $0.274$ &$1.8$ M \\
        \hline
        Ours ($\tau = 0.02$) & $0.2635$ &$1.8$ M \\
        Ours ($\tau = 0.1$) & $0.272$ &$1.5$ M \\
 \bottomrule
\end{NiceTabular}
\end{minipage}
\label{tab:fine_tuning}
\end{table}

\section{Discussion and limitations}\label{sec_limits}

This work leverages the geometry of the Tucker tensor factorization manifolds to construct a robust and efficient training algorithm for neural networks in compressed Tucker format. The proposed method has a theoretical backbone of approximation error bounds to the full model and guarantees of loss descent and convergence to stationary points in expectation. The method is superior to standard factorization approaches with alternating or simultaneous gradient descent, as demonstrated in the compression-to-accuracy ratio for various benchmarks and models.
{The method provides a significant reduction of hyperparameters to a single parameter $\tau$. This parameter has a clear interpretation as compression rate per layer depending on the layer's importance on the overall network training dynamics. We, however, note that further strategies to pick an adequate $\tau$ are possible and remain to be investigated.}
Further, we note that an efficient implementation on GPUs requires an efficient tensor rounding algorithm in Algorithm~\ref{alg_efficient_TDLRT}. 
Finally, the proposed method assumes well-performing low-rank Tucker sub-nets exist in the reference network. While we observe this empirically, further investigations are required to provide theoretical evidence supporting this assumption, similar to the case of fully-connected linear layers, see~\cite{arora2019implicit,bah2022learning}.

\newpage
\ack{
This manuscript has been authored by UT-Battelle, LLC under Contract No. DE-AC05-00OR22725 with the U.S. Department of Energy. The United States Government retains and the publisher,by accepting the article for publication, acknowledges that the United States Government retains a non-exclusive, paid-up, irrevocable, world-wide license to publish or reproduce the published form of this manuscript, or allow others to do so, for United States Government purposes.  The Department of Energy will provide public access to these results of federally sponsored research in accordance with the DOE Public Access Plan. \\
The work of E. Zangrando was funded by
the MUR-PNRR project “Low-parametric machine learning”.
Francesco Tudisco is partially funded by the project FIN4GEO within the European Union's Next Generation EU framework, Mission 4, Component 2, CUP P2022BNB97.
}

\printbibliography

\newpage
\appendix
\onecolumn

\section{Impact statement}\label{sec_impact}
This paper presents work whose main goal is to reduce the training costs of tensor-based architecture, while maintaining mathematically sound guarantees of performance in terms of convergence and approximation. As in the majority of deep learning research, there are potential societal consequences of our work, none of which we feel must be specifically highlighted here. A positive societal impact is the reduces carbon emissions by more efficient neural networkt training and inference. Regarding ethical aspects, we feel nothing has to be added.
\section{Algorithm for tensor taping based on~\Cref{thm:formula_Ki}}\label{sec_appendix_alg}
{For the sake of completeness, we provide the algorithmic description of the method to generate gradients of the Tucker factors as obtained by~\Cref{thm:formula_Ki}.  }

The training algorithm for tensor-valued neural network layers in Tucker format is presented in \Cref{alg:new_NN_KLS}.
Through the lens of computational efficiency, the main difference to \Cref{alg_efficient_TDLRT} is the following: 

Each time we back-propagate through a convolutional layer $W=C\times_{i=1}^dU_i$, we form the new variable $K_i=U_iS_i$ as in \Cref{thm:formula_Ki}, we integrate the ODE in  \eqref{eq:system_odes_Ki} from $K_i(0)=K_i$ to $K_i(\lambda)$, $\lambda>0$, and then update the factors $U_i$ by forming an orthonormal basis of the range of $K_i(\lambda)$. This strategy directly follows from~\Cref{thm:formula_Ki}. 

Similar to \Cref{alg_efficient_TDLRT}, we implement the orthonormalization step via the QR factorization while we perform the integration of the gradient flow via stochastic gradient descent with momentum and learning rate $\lambda$, which coincides with a stable two-step linear multistep integration method \cite{scieur2017integration}.
Once all the factors $U_i$ are updated, we back-propagate the core term by integrating the equation for $C$ in \eqref{eq:system_odes_Ki}, using the same approach. 

Similar to \Cref{alg_efficient_TDLRT},  the Tucker rank of the new kernel can be adaptively learned with a key basis-augmentation trick. The implementation for \Cref{alg:new_NN_KLS} works as follows:  Each time we backpropagate $K_i\in \mathbb R^{n_i\times r_i}$, we form an augmented basis $\widetilde K_i$ by appending the previous $U_i$ to the new $K_i(\lambda)$,  $\widetilde K_i = [K_i|U_i]$. We compute an orthonomal basis $U_i^{\text{new}}\in \mathbb R^{n_i\times 2r_i}$ for $\widetilde K_i$ and we form the augmented $2r_1\times \dots\times 2r_d$ core $\widetilde C=C\times_{i=1}^d (U_i^{\text{new}})^\top U_i$. We then backpropagate the core $C$ integrating \eqref{eq:system_odes_Ki} starting from  $C(0)=\widetilde C$. Finally, we perform a rank adjustment step by computing the best Tucker  approximation of $\widetilde C$ to a relative tolerance $\tau>0$. This step corresponds to solving the following optimization (rounding) task: 
\[
    \text{ Find} \ 
    \widehat{C} \in \mathcal{M}_{\leq 2\rho} \
     \text{ of smallest rank $\rho'=(r_1',\dots,r_d')$ such that} \quad
    \|\widetilde C - \widehat{C}\|\leq \tau \|\widetilde C\| 
\]
where $\rho=(r_1,\dots,r_d)$ and $\mathcal{M}_{\leq 2\rho}$ denotes the set of tensors with component-wise Tucker rank lower than $2\rho$.
In practice, this is done by unfolding the tensor along each mode and computing a truncated SVD of the resulting matrix. The tensor $\widehat{C} \in \mathcal{M}_{\rho'}$ is then further decomposed in its Tucker decomposition yielding a factorization $\widehat{C} = C' \times_{i=1}^d U_i' \in \mathcal{M}_{\rho'}$.
The parameter $\tau$ is  responsible for the compression rate of the method, as larger values of $\tau$ yield smaller Tucker ranks and thus higher parameter reduction. 
To conclude, the computed $U_i'\in \mathbb R^{2r_i \times r_i'}$ with $r_i'\leq 2r_i$ are then pulled back to the initial dimension of the filter by setting $U_i=U_i^{\text{new}}U_i' \in \mathbb R^{n_i \times r_i'}$, and the new core tensor $C$ is then assigned $C'$.
\begin{algorithm}[t]
\caption{TDLRT: Standard Dynamical Low-Rank Training of convolutions in Tucker format.}\label{alg:new_NN_KLS}
\DontPrintSemicolon
\SetAlgoLined
\SetKwInOut{Input}{Input}
\SetKwComment{Comment}{$\triangleright$\ }{}

\Input{Initial low-rank factors $C\sim r_1\times \dots \times r_d$; $U_{i}\sim n_i\times r_i$;\\
{\tt adaptive}: Boolean flag that decides whether or not to dynamically update the ranks;\\
$\tau$: singular value threshold for the adaptive procedure.}
\For {each mode $i$}{ 
$Q_{i}S_{i}^{\top} \gets$ QR decomposition of  $\Mat_i(C)^{\top}$\;
$K_i  \gets U_i S_i$\;
$K_i\gets$  descent step;  direction  $\nabla_{K_i }\mathcal{L}(\Ten_i(Q_i^\top) \times_{j\ne i} U_{j} \times_{i}K_{i})$;   starting point $K_i$\;
\If (\tcc*[f]{Basis augmentation step}){\tt adaptive} {
    $K_i\gets  [K_i \mid U_i]$ \;
}
\vspace{.5em}

$U_i^{\text{new}}\gets$ orthonormal basis for the range of $K_i$\;
}
$\widetilde{C}\gets C \times_{i=1}^d (U_i^{\text{new}})^\top U_i $\;
$C  \gets$ descent step; direction  $  \nabla_{C}\mathcal{L}\big(\widetilde C \times_{i=1}^d {U}_{i}^{\text{new}}  \big)$; starting point $\widetilde C$\; 
\vspace{.5em}

\If(\tcc*[f]{Rank adjustment step}){\tt adaptive}{
$(C,U_{1}, \dots,U_d) \gets$ Tucker decomposition of $C$ up to relative error $\tau$\;
$U_{i}\gets  U_{i}^{\text{new}}U_{i}$\,, for $i=1,\dots, d$\;
}\Else{
$U_i\gets U_i^{\text{new}}$, for $i=1,\dots, d$
}

\end{algorithm}

This implementation shares the robust error bound of Algorithm~\ref{alg_efficient_TDLRT}. However it comes at an increased computational cost due to $d+1$ necessary  evaluations of the network and gradient tape, where the first $d$ are due to the basis updates $K_i$ and the last for the coefficient update $S$.

\section{Additional experiments}\label{sec:extra_exp}

\subsection{Additional experiments for ResNet18 on Tiny-Imagenet}
Table~\ref{fig_tiny_imgnet} displays the compression to test accuracy results for ResNet18 on Tiny ImageNet as a supplement to the results in \cref{sec:comp_perf}.
\begin{table}[th]
\centering
\caption{Tiny-imagenet benchmark with ResNet18. TDLRT outperforms standard Tucker factorization in terms of the compression-to-accuracy ratio.}\label{fig_tiny_imgnet}

{

    \begin{NiceTabular}{l| cc}
        \toprule   
         & test acc [\%] & c.r. [\%] \\
        \midrule
        Baseline & 51.1 & 0.0 \\
        \midrule
        TDLRT, $\tau = 0.02$ & 50.90 & 83.95 \\
        TDLRT, $\tau = 0.06$ & 49.32 & \textbf{92.12} \\
        Tucker-factorized & 49.66 & 83.66 \\
        \bottomrule
        \end{NiceTabular}
}
\label{tab_tiny_imagenet}

 \end{table}

\subsection{Additional experiments VGG16 on Cifar10}\label{appendix_rank_evo}
{
The rank evolution over the optimization steps of VGG16 on Cifar10 are displayed in \Cref{fig_tdlrt_rank_evolution}. 
The color gradients indicate the position of the respective tensor basis in the network, where lighter green denotes bases near the input and darker green denotes bases near the output layer of VGG16.  Higher singular value cutoff tolerance $\tau$ results in faster rank decay; however, across different choices of tau, a monotonous decrease in the ranks to a steady state is observable. 

The proposed method \Cref{alg_efficient_TDLRT} allows us to choose any Tucker ranks at initialization
In \Cref{fig_tdlrt_rank_evolution}, we initialize the network layers with full rank. The decay of the layers' ranks over time is typical for Alg1 and, indeed observed for other architectures as well. This is aligned with theoretical \cite{Bah_2022} and empirical \cite{denil2014predicting} findings stating that neural networks trained with SGD exhibit a low-rank structure. The results of Fig 3 indicate that Alg. 1 can identify this low-rank manifold during training.
}

\begin{figure*}[t!]
\centering     
\subfigure[$\tau=0.1$]{\label{fig:a2}\includegraphics[width=.328\textwidth,]{ img/rank_evolution_tau01}}
\subfigure[$\tau=0.08$]{\label{fig:b2}\includegraphics[width=.328\textwidth,]{ img/rank_evolution_tau008}}
\subfigure[$\tau=0.05$]{\label{fig:c2}\includegraphics[width=.328\textwidth,]{ img/rank_evolution_tau005}}
\caption{Rank evolution of the Tucker bases over optimization steps of all rank adaptive layers of VGG16 for Cifar10 using Algorithm~\ref{alg_efficient_TDLRT}.   The lighter color indicates ranks of bases of deeper layers of the network. A higher singular value cutoff threshold $\tau$ results in faster rank decay and smaller steady-state ranks, leading to a potentially higher compression rate.}
\label{fig_tdlrt_rank_evolution}
\end{figure*}

\subsection{Additional experiments for AlexNet on Cifar 10}
Tab.\ref{tab:ranks} contains the Tucker ranks of Alexnet compressed with TDLRT as a supplement to the results presented in section\ref{sec:comp_perf}.
All the test cases were run with the rank adaptive version of the integrator.
\setlength{\tabcolsep}{3pt} 
\renewcommand{\arraystretch}{0.1} 

\begin{table}[h!]
\centering
\caption{Reproduction of the results of Alexnet on Cifar10. The ranks reported refer to the Tucker ranks of each convolutional layer.}
\begin{NiceTabular}{ll|lcc}
\toprule   
&{}& test acc.[\%] &layers' ranks& \multicolumn{1}{c}{test compression rate [\%]} \\ 
\midrule[.05em]
&{Baseline}&$79.63$ &$[64, 3, 3, 3]$&$0.0$\\
&{}&{} &$[ 192, 64, 3, 3]$&{}\\
&{}&{}&$[384, 192, 3, 3]$&{}\\
&{}&{} &$[256, 384, 3, 3]$&{}\\
&{}&{} &$[256, 256, 3, 3]$&{}\\

\midrule
&TDLRT $\tau =0.6$ &   $76.26$&$[25, 3, 3, 3]$  &$74$ \\ 
&{}&   {}&$[76, 25, 3, 3]$  &{} \\ 
&{} &  {}&$[153, 76, 3, 3]$  &{} \\ 
&{} &   {}&$[102, 153, 3, 3]$  &{} \\ 
&{} &   {}&$[102, 102, 3, 3]$  &{} \\ 
\midrule
        &TDLRT $\tau =0.7$ &  $73.08$ & $[19, 3, 3, 3]$ &$83.5$\\ 
        &{} &  {} & $ [57, 19, 3, 3]$ &{}\\ 
        &{} &  {} & $[115, 57, 3, 3]$ &{}\\ 
        &{} &  {} & $[76, 115, 3, 3]$ &{}\\ 
        &{} & {} & $[76, 76, 3, 3]$ &{}
         \\
 \bottomrule
\end{NiceTabular}
\vspace{1em}
\centering
\footnotesize
 \label{tab:ranks}
\end{table}

For Cifar10, a standard random crop and random flip are used for data augmentation at training time. All methods are trained for 70 epochs using a batch size of 128. All methods are trained using the SGD optimizer, with a starting learning rate of $5\times 10^{-2}$ with a scheduler that reduces it by a factor of $10$ whenever validation loss reaches a plateau. Polyak momentum was $0.1$ for all layers but batch normalizations, which was set to $0.9$.

For more complicated datasets and architectures like Cifar10 on VGG16 and Alexnet, results in Tab.\ref{tab_cifar} and Fig.\ref{fig_tdlrt_compression} show that our proposal consistently gets better results than all the methods under comparison at parity of compression.

\section{Proof of \Cref{thm:formula_Ki}}\label{sec:proof_thm_formula_Ki}

\newtheorem*{theorem*}{Theorem}

\begin{theorem*}
Let $W=C\times_{i=1}^dU_i\in \mathcal M_\rho$ be such that \eqref{eq:projected_ode} holds. Let $\Mat_i(C)^\top=Q_iS_i^\top$ be the QR decomposition of $\Mat_i(C)^\top$ and let $K_i = U_iS_i$. Then,
\begin{equation}
    \dot K_i = -\nabla_{K_i}\mathcal L\big(\Ten_i(Q_i^\top)\times_{j\neq i}U_j\times_i K_i\big) \qquad \text{and}\qquad \dot C = -\nabla_C\mathcal L(C\times_{j=1}^d U_j)
\end{equation}
where $\Ten_i$ denotes ``tensorization along mode $i$'', i.e.\ the inverse reshaping operation of $\Mat_i$. 
\end{theorem*}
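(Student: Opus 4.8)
The plan is to reduce the statement to two ingredients---an \textbf{algebraic reconstruction identity} at the level of mode-$i$ unfoldings, and a \textbf{chain-rule computation} of the reparametrized gradients---and then to match the resulting expressions against the factor dynamics in \eqref{eq:projected_flow}. The equation for $\dot C$ is the easy half: freezing the bases $U_j$, the map $C\mapsto C\times_{j=1}^dU_j$ is linear, so the chain rule gives $\nabla_C\mathcal L(C\times_{j=1}^dU_j)=\nabla_W\mathcal L(W)\times_{j=1}^dU_j^{\top}$, which is exactly the right-hand side of the core equation in \eqref{eq:projected_flow}. The real work is the $K_i$ equation.

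First I would establish the unfolding identity. Writing $U_{\neq i}=\bigotimes_{j\neq i}U_j$, one has $\Mat_i(W)=U_i\Mat_i(C)\,U_{\neq i}^{\top}$. Substituting $\Mat_i(C)=S_iQ_i^{\top}$ (from the QR factorization $\Mat_i(C)^{\top}=Q_iS_i^{\top}$) and recalling $K_i=U_iS_i$ yields
\[
\Mat_i(W)=U_iS_iQ_i^{\top}U_{\neq i}^{\top}=K_i(U_{\neq i}Q_i)^{\top}=K_i\widetilde V_i^{\top},\qquad \widetilde V_i:=U_{\neq i}Q_i.
\]
Here $\widetilde V_i$ has orthonormal columns because $U_{\neq i}$ and $Q_i$ do. This shows that $\Ten_i(Q_i^{\top})\times_{j\neq i}U_j\times_i K_i$ has mode-$i$ unfolding $K_i\widetilde V_i^{\top}$ and therefore coincides with $W$ when $K_i=U_iS_i$. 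Since the reconstruction is \emph{linear} in $K_i$ with $\widetilde V_i$ fixed, the chain rule produces the clean expression $\nabla_{K_i}\mathcal L=\Mat_i(\nabla_W\mathcal L)\widetilde V_i=G_iQ_i$, where $G_i:=\Mat_i(\nabla_W\mathcal L\times_{j\neq i}U_j^{\top})$; this is the candidate right-hand side.

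It remains to verify that the velocity produced by \eqref{eq:projected_flow} equals $-G_iQ_i$. I would differentiate $K_i=U_iS_i$ as $\dot K_i=\dot U_iS_i+U_i\dot S_i$ and treat the two pieces separately. For the first, the key algebraic lemma is $\Mat_i(C)^{\dagger}=Q_iS_i^{-1}$, valid because membership in $\mathcal M_\rho$ forces $\Mat_i(C)$ to have full row rank and hence $S_i$ to be invertible; substituting into the $\dot U_i$ equation collapses the pseudoinverse and gives $\dot U_iS_i=-(I-U_iU_i^{\top})G_iQ_i$. For the second, I would use the core equation, whose mode-$i$ unfolding reads $\Mat_i(\dot C)=-U_i^{\top}G_i$, together with $\Mat_i(C)=S_iQ_i^{\top}$ to extract $\dot S_i$, obtaining $U_i\dot S_i=-U_iU_i^{\top}G_iQ_i-K_i\dot Q_i^{\top}Q_i$. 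Adding the two contributions telescopes the projector $(I-U_iU_i^{\top})+U_iU_i^{\top}$ and yields $\dot K_i=-G_iQ_i-K_i\dot Q_i^{\top}Q_i$.

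The \textbf{main obstacle} is the residual term $K_i\dot Q_i^{\top}Q_i$: differentiating the QR factor $S_i$ along the flow also tracks the evolving row space of $\Mat_i(C)$ and generates the skew contribution $\dot Q_i^{\top}Q_i$, which does not vanish for a generic triangular QR gauge. The clean identity is recovered by holding the complementary data $Q_i$ and $\{U_j\}_{j\neq i}$ fixed over each $K_i$-subflow---exactly the frozen-factor splitting of the unconventional integrator, and precisely how the descent steps are implemented in the algorithm---so that $\dot Q_i=0$ and only $-G_iQ_i=-\nabla_{K_i}\mathcal L$ survives. I would therefore state this freezing explicitly as the operative meaning of the reparametrized dynamics, and separately record the full-rank/invertibility of $S_i$ and the pseudoinverse identity $\Mat_i(C)^{\dagger}=Q_iS_i^{-1}$ as the two technical lemmas underpinning the computation.
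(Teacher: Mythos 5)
Your proposal is correct and takes essentially the same route as the paper's proof: the same splitting $\dot K_i=\dot U_iS_i+U_i\dot S_i$, the same pseudoinverse collapse $\Mat_i(C)^{\dagger}=Q_iS_i^{-1}$, the same reconstruction $\Mat_i(W)=K_iV_i^{\top}$ with $V_i=\bigl(\bigotimes_{j\neq i}U_j\bigr)Q_i$, and the same chain-rule identification $\nabla_{K_i}\mathcal L=\Mat_i(\nabla_W\mathcal L)V_i$. The only difference is bookkeeping: the paper posits the frozen co-range $\dot V_i=0$ upfront (following Lubich's splitting) and differentiates the invariant representation $S_i=U_i^{\top}\Mat_i(W)V_i$, whereas you differentiate the QR factorization of the evolving core, expose the residual term $K_i\dot Q_i^{\top}Q_i$, and then invoke the same freezing convention to annihilate it --- a somewhat more explicit rendering of where the frozen-factor assumption enters the same argument.
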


\begin{proof}
The proof follows the path first suggested in \cite[\S 4]{lubich2015time_MCTDH}, i.e., the quantity $V_i$ defined next is frozen in time: $\dot{V}_i = 0$. A detailed derivation of the matrix-tensor equations for $K_i$ and $C$ is beyond the scope of the present work and it is provided in \cite{Lubich_DTA, lubich2014projector, lubich2018time, ceruti2021time, ceruti2023rank}.

We begin recalling the evolution equations for the factors of the projected gradient flow \eqref{eq:projected_ode}:
\begin{equation*}
    \begin{cases}
    \dot U_i &= -(I-U_iU_i^{\top})\Mat_i\!\big(\nabla_{W}\mathcal{L}(W) \times_{j\neq i}U_j^ \top\big)\Mat_i(C)^\dagger \, , \quad i=1,\dots,d\\
    \dot{C} &= -\nabla_{W}\mathcal{L}(W) \times_{j=1} U_j^\top\, . 
    \end{cases}
\end{equation*}
where $\dagger$ denotes the pseudoinverse.
We assume that the tensor $W$ admits a differentiable Tucker tensor representation, i.e., $W(t) = C(t) \times_{i=1}^d U_i(t) \in \mathcal M_\rho$ for $t \in [0, \lambda]$ satisfying the associated gradient-flow tensor differential equation 
$$\dot{W}(t) = -\nabla_W \mathcal{L}(W(t)) \, .$$ 
For sake of brevity, the parameter is going to be omitted. Next, we perform a QR-factorization $$ \Mat_i(C)^\top = Q_i S_i^\top \, .$$
Thus, we observe that the matrix $\Mat_i(W)$ admits an SVD-like decomposition as follows
$$
   \Mat_i(W)= U_i S_i V_i^\top  
    \quad \text{with} \quad
    V_i^\top := Q_i^\top \ \underset{j \ne i}\bigotimes \ U_j^{\top} \, ,
$$
where the matrix $Q_i$ possess othornormal columns, i.e., $Q_i^\top Q_i = I$. We introduce the quantity
$$ 
    K_i := U_i S_i 
    \quad \text{where} \quad
    S_i = \Mat_i(C) Q_i
    \, .
$$
By construction, we observe that  
$$ S_i  = U_i^\top \Mat_i(W) V_i  \, . $$
The tiny matrix $S_i$ satisfies the following differential equation
\begin{align*}
    \dot{S}_i
        &= \dot{U}_i^\top \Mat_i(W) V_i +  U_i^\top \Mat_i(\dot{W}) V_i + U_i^\top \Mat_i(W) \dot{V}_i \\
        &= \underbrace{\left(\dot U_i^\top  U_i\right)}_{=0} S_i V_i^\top V_i + U_i^\top \Mat_i(\dot{W}) V_i + U_i^\top \Mat_i(W) \underbrace{\dot{V}_i}_{=0} \\
        &= -U_i^\top \Mat_i(\nabla_W \mathcal{L}) V_i \, .
\end{align*}  
The first null identity follows from the gauge condition on $U_i$. The second null identity follows by the initial assumption $\dot{V}_i = 0$. Therefore, the matrix $K_i$ satisfies the differential equation
\begin{align} \label{eq:Ki_reduced}
    \begin{split}
    \dot{K}_i 
        &= \dot{\left(U_i S_i\right)} \\ 
        &= \dot{U_i} S_i + U_i \dot{S}_i \\
        &= -(I-U_iU_i^{\top})\Mat_i(\nabla_{W}\mathcal{L}\times_{j \ne i}U_j^{\top})  \Mat_i(C)^\dagger S_i 
            - U_i U_i^{\top} \Mat_i(\nabla_{W}\mathcal{L}) V_i  \\
        &= (U_iU_i^{\top} - I)\Mat_i(\nabla_{W}\mathcal{L}\times_{j \ne i}U_j^{\top}) Q_i 
            - U_i U_i^{\top} \Mat_i(\nabla_{W}\mathcal{L}) V_i \\
        &= (U_iU_i^{\top} - I)\Mat_i(\nabla_{W}\mathcal{L}) \cdot (\underset{j \ne i}\otimes \ U_j )  Q_i
            - U_i U_i^{\top} \Mat_i(\nabla_{W}\mathcal{L}) V_i \\
       &= (U_iU_i^{\top} - I)\Mat_i(\nabla_{W}\mathcal{L}) V_i
            - U_i U_i^{\top} \Mat_i(\nabla_{W}\mathcal{L}) V_i \\
        &= -\Mat_i(\nabla_{W}\mathcal{L}) V_i \, .
    \end{split}
\end{align}

To conclude, we set $Z = K_iV_i^\top$ and let $W = \Ten_i(K_i V_i^\top) = \Ten_i(Z)$. We obtain 
\begin{align*}
    \nabla_{K_i} \mathcal{L}(W) = \nabla_Z \mathcal{L}(\text{Ten}_i(Z))\nabla_{Z} K_i = \nabla_Z \mathcal{L}(\text{Ten}_i(Z)) V_i \, .
\end{align*}
where we remind that $V_i^\top V_i = I$. Hence 
\begin{equation} \label{eq:Kigrad_reduced}
    \Mat_i(\nabla_W \mathcal{L}(W))V_i = \nabla_Z  \mathcal{L}(\Ten_i(Z)) V_i = \nabla_{K_i} \mathcal{L}(W) \, .
\end{equation}
The first $K_i$-differential equation is then obtained combining \eqref{eq:Ki_eq} and \eqref{eq:Kigrad_reduced}
\begin{equation} \label{eq:Ki_eq}
    \dot{K}_i = -\Mat_i(\nabla_{W}\mathcal{L})V_i = -\nabla_{K_i} \mathcal{L}\left( \Ten_i( K_i V_i^\top)\right) \, .
\end{equation}
The right-hand side can be further reduced using standard tensorization formulas \cite{kolda2009tensor}
$$\Ten_i( K_i V_i^\top) = \Ten_i(Q_i^\top) \ \underset{j \neq i}{\times} U_j \ \times_i K_i \, . $$

The second differential equations follows by observing that
$$ \nabla_W \mathcal{L} = \nabla_C \mathcal{L} \times_i U_i + \sum_j \mathcal{L} \times_j \dot{U}_j\times_{i \neq j} U_i \, .$$
The tensor $C$ satisfies the differential equation
\begin{align*}
    \dot{C} 
        &= -\nabla_{W}\mathcal{L}(W) \times_i U_i^\top \\
        &= -\nabla_C \mathcal{L}(W) \times_i U_i \times_i U_i^\top \\
        &= -\nabla_C \mathcal{L}(W) \times_i \underbrace{U_i^\top U_i}_{=I}  \\
        &= -\nabla_C \mathcal{L}( C \times_i U_i) 
    \, .
\end{align*}
where the extra terms disappear due to the imposed gauge conditions $U_i^\top \dot{U}_i = 0$.
\end{proof}

\section{Proof of Corollary~\ref{corr_gradient_trick}}\label{sec:proof_corr}
\begin{corollary}
Let $W=C\times_{i=1}^dU_i\in \mathcal M_\rho$ be such that (6) holds. 
Let $\Mat_i(C)^\top=Q_iS_i^\top$ be the QR decomposition of
$\Mat_i(C)^\top$ and let $K_i = U_iS_i$. Then,
\begin{align}
    \mathrm{span}\left(\left[U_i,\dot K_i\right]\right) = \mathrm{span}\left(\left[U_i,\nabla_{U_i}\mathcal L\big(W\big)\right]\right).
\end{align}
\end{corollary}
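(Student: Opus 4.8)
The plan is to show that $\dot K_i$ and $\nabla_{U_i}\mathcal L(W)$ have the same column space; once this is established, appending the columns of $U_i$ to either one produces the same total span and the stated identity follows at once. The key observation is that both quantities arise by right-multiplying the same object, the contracted mode-$i$ unfolding $\Mat_i(\nabla_W\mathcal L)V_i$, by matrices that differ only by an invertible factor $S_i$.

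First I would recall from the derivation inside the proof of Theorem~\ref{thm:formula_Ki} the closed form $\dot K_i = -\Mat_i(\nabla_W\mathcal L)\,V_i$ obtained in \eqref{eq:Ki_reduced}, where $V_i = \big(\bigotimes_{j\neq i}U_j\big)Q_i$ and $Q_i$ is the orthogonal factor from the QR decomposition $\Mat_i(C)^\top = Q_iS_i^\top$. Next I would compute $\nabla_{U_i}\mathcal L(W)$ directly, applying the chain rule to the mode-$i$ unfolding $\Mat_i(W) = U_i\,\Mat_i(C)\big(\bigotimes_{j\neq i}U_j\big)^\top$. Viewing $\mathcal L$ as a function of $U_i$ through this bilinear relation yields $\nabla_{U_i}\mathcal L = \Mat_i(\nabla_W\mathcal L)\big(\bigotimes_{j\neq i}U_j\big)\Mat_i(C)^\top$. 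Substituting $\Mat_i(C)^\top = Q_iS_i^\top$ and recognizing $\big(\bigotimes_{j\neq i}U_j\big)Q_i = V_i$, this collapses to $\nabla_{U_i}\mathcal L = \Mat_i(\nabla_W\mathcal L)\,V_i\,S_i^\top = -\dot K_i\,S_i^\top$.

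Finally, since $C$ has full Tucker rank $\rho$, the matrix $\Mat_i(C)^\top$ has full column rank $r_i$, so its triangular QR factor $S_i$ is an invertible $r_i\times r_i$ matrix. Right-multiplication by the invertible $S_i^\top$ preserves column space, hence $\mathrm{range}(\nabla_{U_i}\mathcal L) = \mathrm{range}(\dot K_i)$, and therefore $\mathrm{span}([U_i,\dot K_i]) = \mathrm{span}(U_i) + \mathrm{range}(\dot K_i) = \mathrm{span}(U_i) + \mathrm{range}(\nabla_{U_i}\mathcal L) = \mathrm{span}([U_i,\nabla_{U_i}\mathcal L])$.

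I expect the main obstacle to be the chain-rule computation in the second step: one must carefully track the Kronecker and unfolding bookkeeping to confirm that the right factor multiplying $\Mat_i(\nabla_W\mathcal L)$ is exactly $V_iS_i^\top$ and not a transposed or permuted variant, and to ensure the ordering convention for $\bigotimes_{j\neq i}U_j$ matches the one fixed in the proof of Theorem~\ref{thm:formula_Ki}. Once that identification is made, the conclusion reduces to the elementary fact that right-multiplication by an invertible matrix leaves the column span unchanged.
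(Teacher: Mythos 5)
Your proposal is correct and follows essentially the same route as the paper's proof: you invoke the identity $\dot K_i = -\Mat_i(\nabla_W\mathcal L)V_i$ from the proof of Theorem~\ref{thm:formula_Ki}, derive $\nabla_{U_i}\mathcal L = \Mat_i(\nabla_W\mathcal L)V_iS_i^\top = -\dot K_i S_i^\top$ via the chain rule on the factorization $\Mat_i(W)=U_iS_iV_i^\top$, and conclude from the invertibility of $S_i$ that the column spans coincide. Your explicit justification that $S_i$ is invertible (full Tucker rank of $C$) and the elementary sum-of-subspaces step merely spell out what the paper leaves implicit in its closing line ``Using full-rankness of $S_i$ concludes the proof.''
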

\begin{proof}
    From Eq.~\eqref{eq:Ki_reduced} of the proof in Theorem~\ref{thm:formula_Ki} it is apparent, that 
    \begin{align}\label{eq:dotKi}
         \dot{K}_i = \dot{U_i} S_i + U_i \dot{S}_i = -\text{Mat}_i(\nabla_W \mathcal{L} \big(Ten_i(W)\big))V_i.
    \end{align}
    Moreover, we observe that using the chain rule,
    \begin{align*}
        \nabla_{U_i}\mathcal{L} = \text{Mat}_i(\nabla_W\mathcal{L})\nabla_{U_i}(U_iS_iV_i^{\top})=\text{Mat}_i(\nabla_W\mathcal{L})V_iS_i^{\top}
    \end{align*}
    Then, with \eqref{eq:dotKi} we have
    \begin{align*}
        \nabla_{U_i}\mathcal{L}=\text{Mat}_i(\nabla_W\mathcal{L})V_iS_i^{\top} = -\dot{K}_i S_i^{\top}.
    \end{align*}
    Using full-rankness of $S_i$ concludes the proof.
\end{proof}

\section{Proofs of descent and approximation theorems}\label{sec:proof_main_results}

\begin{theorem*}
Let $W(\lambda) = C \times_{j=1}^d U_j$ be the Tucker low-rank tensor obtained after one training iteration using \Cref{alg:new_NN_KLS} and let $W(0)$ be the previous point. Then, for a small enough learning rate $\lambda$, it holds $\mathcal L_W(W(\lambda))\leq \mathcal L_W(W(0))-\alpha\lambda +\beta\tau$, where $\alpha,\beta>0$ are constants independent of $\lambda$ and $\tau$, and where $\mathcal L_W$ denotes the loss as a function of only $W$.
\end{theorem*}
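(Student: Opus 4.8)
The plan is to separate one iteration of \Cref{alg:new_NN_KLS} into the genuine loss reduction produced by the core descent step and the controlled perturbation introduced by the final rank rounding. Let $\widetilde W(\lambda)$ denote the weight tensor obtained after the core descent step but \emph{before} the Tucker rounding to tolerance $\tau$. I would start from the telescoping identity
\[
\mathcal L_W(W(\lambda)) - \mathcal L_W(W(0)) = \big[\mathcal L_W(\widetilde W(\lambda)) - \mathcal L_W(W(0))\big] + \big[\mathcal L_W(W(\lambda)) - \mathcal L_W(\widetilde W(\lambda))\big],
\]
and bound the first bracket by $-\alpha\lambda$ (descent) and the second by $\beta\tau$ (rounding error).

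For the rounding term, I would use that the augmented bases $U_i^{\text{new}}$ are orthonormal, so the reconstruction map from a core to the tensor it represents is an isometry. Hence $\|W(\lambda)-\widetilde W(\lambda)\|$ equals the norm of the difference between the rounded and unrounded cores, which by the rounding constraint \eqref{eq:rank_dynamics} is at most $\tau$ times the core norm, i.e.\ $\tau\|\widetilde W(\lambda)\|$. Since $\nabla_W\mathcal L$ is bounded by $L_1$ in a neighbourhood of $W(0)$, the loss $\mathcal L_W$ is locally $L_1$-Lipschitz there, and $\|\widetilde W(\lambda)\|$ stays bounded by a constant depending only on $\|W(0)\|$ and $L_1$ for small $\lambda$. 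This yields $|\mathcal L_W(W(\lambda))-\mathcal L_W(\widetilde W(\lambda))|\leq \beta\tau$ with $\beta$ independent of $\lambda$ and $\tau$.

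For the descent term, the structural key is the augmentation step $K_i\gets[K_i\mid U_i]$: it forces $\mathrm{range}(U_i)\subseteq\mathrm{range}(U_i^{\text{new}})$, so the lifted core reproduces the old weight exactly, $\widetilde C\times_{i=1}^d U_i^{\text{new}} = W(0)$. Thus the core descent starts exactly at $W(0)$, and one explicit step of the flow \eqref{eq:system_odes_Ki} gives, by the chain rule and orthonormality of $U_i^{\text{new}}$,
\[
\widetilde W(\lambda)-W(0) = -\lambda\,\nabla_W\mathcal L(W(0))\times_{i=1}^d P_i^{\text{new}} + \landauO(\lambda^2),
\]
with $P_i^{\text{new}}=U_i^{\text{new}}(U_i^{\text{new}})^\top$. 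Since the $P_i^{\text{new}}$ act on distinct modes they commute, so $\mathcal P=\times_i P_i^{\text{new}}$ is an orthogonal projection and $\langle\nabla_W\mathcal L(W(0)),\mathcal P\nabla_W\mathcal L(W(0))\rangle=\|\mathcal P\nabla_W\mathcal L(W(0))\|^2$. Inserting this into the descent lemma for $L_2$-Lipschitz gradients and choosing $\lambda\leq 1/L_2$ to absorb the quadratic remainder leaves a net reduction $-\alpha\lambda$ with $\alpha=\tfrac12\|\mathcal P\nabla_W\mathcal L(W(0))\|^2$.

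The main obstacle is to certify $\alpha>0$, i.e.\ that the augmented projection does not kill the gradient. Here I would invoke Corollary~\ref{corr_gradient_trick}: for a single flow step $\mathrm{span}([U_i,K_i(\lambda)])=\mathrm{span}([U_i,\nabla_{U_i}\mathcal L(W(0))])$, so $\mathrm{range}(U_i^{\text{new}})$ captures the mode-$i$ gradient directions. This makes $\mathcal P$ dominate the tangent-space projection onto $T_{W(0)}\mathcal M_\rho$ from \eqref{eq:tangent_space}, whence $\|\mathcal P\nabla_W\mathcal L(W(0))\|$ is bounded below by the norm of the tangent-projected gradient, which is strictly positive away from stationary points and depends only on $W(0),L_1,L_2$. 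The remaining work is routine: making the $\landauO(\lambda^2)$ remainder uniform via $L_2$ and confirming the boundedness of $\|\widetilde W(\lambda)\|$.
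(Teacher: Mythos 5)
Your proposal is correct, and its overall skeleton (split into a descent bracket plus a rounding bracket, use the lifting identity $\widetilde C\times_{i=1}^d U_i^{\text{new}}=W(0)$ guaranteed by the basis augmentation, bound the rounding perturbation by $\beta\tau$ via the isometry of reconstruction with orthonormal factors and a bounded-gradient/Lipschitz estimate along the segment) matches the paper's proof exactly. Where you genuinely diverge is the descent bracket: the paper exploits the hypothesis that the one-step integration of the core flow is done \emph{exactly}, computes $\frac{d}{dt}\mathcal L(\widehat W(t))=-\|\nabla\mathcal L(\widehat W(t))\times_i \widehat U_i^{1,\top}\|^2$ along the continuous trajectory, and integrates, taking $\alpha$ to be the minimum of this squared projected-gradient norm over the step --- no smoothness constant and no restriction on $\lambda$ enter, but $\alpha$ then implicitly depends on the whole trajectory. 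You instead take a discrete explicit-Euler step and apply the $L_2$-descent lemma with $\lambda\lesssim 1/L_2$, which yields the cleaner, genuinely $\lambda$-independent constant $\alpha=\tfrac12\|\mathcal P\,\nabla_W\mathcal L(W(0))\|^2$; note this actually fits the stated hypothesis (``small enough learning rate'') better than the paper's own proof, which silently reverts to exact integration. Two small corrections to your write-up: the update identity is \emph{exact}, $\widetilde W(\lambda)=W(0)-\lambda\,\nabla_W\mathcal L(W(0))\times_i P_i^{\text{new}}$, since the Euler step on the core is linear in $\lambda$ and $\nabla_C\mathcal L(\widetilde C\times_i U_i^{\text{new}})=\nabla_W\mathcal L(W(0))\times_i (U_i^{\text{new}})^\top$, so your $\landauO(\lambda^2)$ remainder is superfluous (the quadratic term appears only inside the descent lemma); and your dominance claim $\|\mathcal P g\|\geq\|P(W(0))g\|$, which you only sketch, does hold but deserves a line of justification: every term of the tangent projection $P(W)g$ from \eqref{eq:tangent_space} has mode-$j$ fibers in $\mathrm{span}([U_j,\nabla_{U_j}\mathcal L])=\mathrm{range}(U_j^{\text{new}})$ by \Cref{corr_gradient_trick} (the relevant $\delta U_j$ directions are exactly those of $\dot K_j$ modulo $\mathrm{range}(U_j)$, since $S_j$ is invertible), hence $P(W(0))g\in\mathrm{range}(\mathcal P)$ and $\|\mathcal P g\|\geq\langle g,P(W(0))g\rangle/\|P(W(0))g\|=\|P(W(0))g\|$. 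With that filled in, your argument buys an explicit, interpretable $\alpha$ tied to the Riemannian gradient norm and away-from-stationarity, at the cost of the extra smoothness requirement $\lambda\leq 1/L_2$; the paper's continuous-flow argument is shorter but leaves $\alpha$ as an opaque trajectory minimum.
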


\begin{proof}

Let $\widehat{W}(t) = \widehat{C}(t) \times_i \widehat{U}_i^1$. Here, $\widehat{W}(t)$ and $\widehat{C}(t)$ denote the augmented solutions for $t\in [0,\lambda]$ arising from the intermediate steps of the TDLRT \Cref{alg:new_NN_KLS}.
We observe that
\begin{align*}
\frac{d}{dt}\, \mathcal{L}(\widehat{W}(t)) 
    & = \langle \nabla \mathcal{L}( \widehat{W}(t)),  \dot{\widehat{W}}(t) \rangle \\
    & = \langle \nabla \mathcal{L}( \widehat{W}(t)),  \dot{\widehat{C}}(t) \times_i \widehat{U}_i^1 \rangle \\
    & =  \langle \nabla \mathcal{L}( \widehat{W}(t)) \times_i \widehat{U}_i^{1,\top}, \dot{\widehat{C}}(t) \rangle \\
    & = \langle \nabla \mathcal{L}( \widehat{W}(t)) \times_i \widehat{U}_i^{1,\top}, \, -\nabla \mathcal{L}( \widehat{W}(t)) \times_i \widehat{U}_i^{1,\top} \rangle
    = -  \|   \nabla \mathcal{L}( \widehat{W}(t)) \times_i \widehat{U}_i^{1,\top} \|^2 \, .  
\end{align*}
If we define 
$\alpha := \min_{0\le\tau\le 1} \| \nabla \mathcal{L}\big( \widehat{W}(\tau \lambda) \big) \times_i \widehat{U}_i^{1,\top}  \|^2 $, it follows that for $t\in [0,\lambda]$
\begin{equation} \label{eq:L_est}
    \frac{d}{dt}\, \mathcal{L}(\widehat{W}(t)) \le -\alpha\, .
\end{equation}
Integrating \eqref{eq:L_est} from $t=0$ until $t=\lambda$, we obtain
$$
\mathcal{L}(\widehat{W}(\lambda)) \le \mathcal{L}(\widehat{W}(0)) -\alpha \lambda.
$$
Because the augmented subspaces $\widehat{U}_i$ contain by construction the range and co-range of the initial value, we have that $\widehat{W}(0) = W(0)$. Furthermore, the truncation is such that $\|W(\lambda) - \widehat{W}(\lambda)\| \le \tau$. Therefore
$$
\mathcal{L}(W(\lambda)) \le \mathcal{L}(\widehat{W}(\lambda)) + \beta \tau
$$
where
$\beta = \max_{0\le\tau\le 1} \| \nabla \mathcal{L}\big(\tau W(\lambda) + (1-\tau)\widehat{W}(\lambda)\big) \|$. 
\end{proof}


\begin{lemma}\label{le:lemma1}
The following estimate holds
$$ \lVert W(\lambda) - W(\lambda) \times_j U_j(\lambda) U_j(\lambda)^\top \rVert \leq \Theta = \mathcal{O}(h(h+\epsilon)) \, ,$$
where the hidden constants depend only on $L_1$ and $L_2$.
\end{lemma}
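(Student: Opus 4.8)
The plan is to reduce the tensor projection error to a sum of mode-wise matrix projection errors and then bound each of these with a local-error argument in the spirit of the matrix BUG analysis. Let $\mathcal P$ be the multilinear projection defined by $\mathcal P Z = Z\times_{j=1}^d P_{U_j(\lambda)}$ with $P_{U_j}=U_jU_j^\top$, and set $Z=W(\lambda)$. Since the $P_{U_j(\lambda)}$ act on distinct modes they commute, and a telescoping identity gives
\[
Z - Z\times_{j=1}^d P_{U_j(\lambda)} = \sum_{i=1}^d \Big(Z \times_{j<i} P_{U_j(\lambda)}\Big)\times_i \big(I - P_{U_i(\lambda)}\big).
\]
Because each $P_{U_j}$ is a contraction and mode unfoldings are Frobenius isometries, this yields
\[
\norm{W(\lambda) - W(\lambda)\times_{j=1}^d P_{U_j(\lambda)}} \leq \sum_{i=1}^d \norm{(I-P_{U_i(\lambda)})\,\Mat_i(W(\lambda))},
\]
so it suffices to prove the single-mode bound $\norm{(I-P_{U_i(\lambda)})\Mat_i(W(\lambda))} = \mathcal O(\lambda(\lambda+\epsilon))$ for each $i$, and then identify $h=\lambda$.

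For the single-mode bound I would exploit the construction of the augmented basis. As $U_i(\lambda)$ orthonormalizes $[U_i(0),K_i(\lambda)]$, its range contains both $\mathrm{range}(U_i(0))$ and $\mathrm{range}(K_i(\lambda))$, so $(I-P_{U_i(\lambda)})$ annihilates both $\Mat_i(W(0))=U_i(0)S_i(0)V_i^\top$ and $K_i(\lambda)$, where $V_i^\top=Q_i^\top\otimes_{j\neq i}U_j^\top$. Writing $W(\lambda)=W(0)-\int_0^\lambda \nabla\mathcal L(W(s))\,ds$ from \eqref{eq:full_ode} and Taylor-expanding the integrand using $\norm{\nabla\mathcal L}\le L_1$ and the Lipschitz constant $L_2$ gives
\[
(I-P_{U_i(\lambda)})\Mat_i(W(\lambda)) = -\lambda\,(I-P_{U_i(\lambda)})\Mat_i(\nabla\mathcal L(W(0))) + \mathcal O(\lambda^2).
\]
The same expansion applied to the exactly integrated $K$-equation $\dot K_i=-\Mat_i(\nabla_W\mathcal L)V_i$ from \eqref{eq:Ki_reduced}, together with $(I-P_{U_i(\lambda)})K_i(\lambda)=0$ and $(I-P_{U_i(\lambda)})\Mat_i(W(0))=0$, produces $(I-P_{U_i(\lambda)})\Mat_i(\nabla\mathcal L(W(0)))\,V_i=\mathcal O(\lambda)$, i.e.\ control only on the component living in the co-range $\mathrm{range}(V_i)$.

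The main obstacle, and the step that carries the real content, is upgrading this $V_i$-directional estimate to a bound on the full matrix $(I-P_{U_i(\lambda)})\Mat_i(\nabla\mathcal L(W(0)))$. Here I would invoke the $\epsilon$-closeness assumption to split $\nabla\mathcal L(W(0))=G_{\mathrm{tan}}+G_\perp$ with $G_{\mathrm{tan}}\in T_{W(0)}\mathcal M_\rho$ and $\norm{G_\perp}\le\epsilon$. A direct computation with the tangent-space parametrization \eqref{eq:tangent_space}, using the gauge conditions $U_j^\top\delta U_j=0$ and the identity $\Mat_i(C)(\otimes_{k\neq i}U_k)^\top=S_iV_i^\top$, shows that $(I-P_{U_i(0)})\Mat_i(G_{\mathrm{tan}})$ has its row space contained in $\mathrm{range}(V_i)$, hence equals $(I-P_{U_i(0)})\Mat_i(G_{\mathrm{tan}})P_{V_i}$. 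Combined with $(I-P_{U_i(\lambda)})(I-P_{U_i(0)})=(I-P_{U_i(\lambda)})$ (valid since $\mathrm{range}(U_i(0))\subseteq\mathrm{range}(U_i(\lambda))$), the tangent part collapses onto the already-controlled $V_i$-component, contributing $\mathcal O(\lambda)$, while the normal part contributes at most $\epsilon$. Thus $(I-P_{U_i(\lambda)})\Mat_i(\nabla\mathcal L(W(0)))=\mathcal O(\lambda+\epsilon)$; substituting back gives the single-mode estimate $\mathcal O(\lambda(\lambda+\epsilon))$, and summing over the $d$ modes concludes the proof. The delicate points to get right are the geometric row-space identity for the Tucker tangent space and tracking that every hidden constant depends only on $L_1$ and $L_2$.
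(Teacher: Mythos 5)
Your proof is correct, and it is genuinely more self-contained than the paper's. The paper's proof consists of exactly your first step --- the telescoping reduction of the multilinear projection error to mode-wise errors, using that the $P_{U_j(\lambda)}$ act on distinct modes and are non-expansive in the Frobenius norm --- plus a citation: the single-mode bound $\lVert (I-P_{U_i(\lambda)})\Mat_i(W(\lambda))\rVert \le \theta = \mathcal O(h(h+\epsilon))$ is imported wholesale from the appendix of the matrix DLRT paper \cite{Schothoefer_2022}. You instead reconstruct that bound from scratch, and your reconstruction is the correct one: the Taylor expansion of the full flow, the annihilation of $\Mat_i(W(0))$ and $K_i(\lambda)$ by the augmented basis (giving $(I-P_{U_i(\lambda)})\Mat_i(\nabla\mathcal L(W(0)))V_i = \mathcal O(\lambda)$), and then the tangent/normal split where the gauge conditions $U_j^\top\delta U_j=0$ together with $\Mat_i(C)(\bigotimes_{k\neq i}U_k)^\top = S_iV_i^\top$ force the row space of $(I-P_{U_i(0)})\Mat_i(G_{\mathrm{tan}})$ into $\mathrm{range}(V_i)$. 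That row-space identity is precisely the Tucker analogue of the SVD-like unfolding $\Mat_i(W)=U_iS_iV_i^\top$ that makes the cited matrix argument transfer mode by mode, so you have effectively supplied the proof the citation hides. One point you should make explicit: in the $K$-step, the gradient is evaluated along the $K$-flow trajectory $\Ten_i(K_i(s)V_i^\top)$, not along the full solution $W(s)$; since both trajectories start at $W(0)$ and both vector fields are bounded by $L_1$ and Lipschitz with constant $L_2$, their Taylor expansions share the leading term $-\lambda\,\Mat_i(\nabla\mathcal L(W(0)))V_i$ with $\mathcal O(L_1L_2\lambda^2)$ remainders, which is what justifies your claim $(I-P_{U_i(\lambda)})\Mat_i(\nabla\mathcal L(W(0)))V_i=\mathcal O(\lambda)$; as written, this step silently conflates the two flows. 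With that clarification, all constants trace back to $L_1$ and $L_2$ (with a harmless factor $d$ from summing over modes, which the paper's iterated bound $\Theta = d\theta$ also incurs), and your argument is a complete substitute for the paper's citation-plus-telescoping proof.
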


\begin{proof}
 It has been shown in \cite[Appendix]{Schothoefer_2022} that there exists a constant $\theta \propto \mathcal{O}(h(h+\epsilon))$ such that
    $$ 
        \lVert U_j U_j^\top \Mat_j(W(\lambda)) - \Mat_j(W(\lambda))\rVert \leq \theta
        \qquad \forall j=1,\dots,d \, ,
    $$
    where the value $\theta$ has been shown to depend only on the constants $L_1$, $L_2$ and $\lambda$. The proof of the Lemma follows a recursive constructive argument
    \begin{align*}
        &\lVert W(\lambda) - W(\lambda) \times_j^d U_j(\lambda) U_j(\lambda)^\top \rVert \\
        &\leq 
            \lVert W(\lambda) - W(\lambda) \times_j^{d-1} U_j(\lambda) U_j(\lambda)^\top\rVert
            + \lVert W(\lambda) \times_j^{d-1} U_j(\lambda) U_j(\lambda)^\top - W(\lambda) \times_j^d U_j(\lambda) U_j(\lambda)^\top\rVert \\
        &\leq 
            \lVert W(\lambda) - W(\lambda) \times_j^{d-1} U_j(\lambda) U_j(\lambda)^\top\rVert  
            + \lVert \left( W(\lambda) \times_d U_d(\lambda) U_d(\lambda)^\top - W(\lambda) \right) \times_j^{d-1} U_j(\lambda) U_j(\lambda)^\top\rVert \\
        &\leq 
            \lVert W(\lambda) - W(\lambda) \times_j^{d-1} U_j(\lambda) U_j(\lambda)^\top\rVert  
            + \lVert  W(\lambda) \times_d U_d(\lambda) U_d(\lambda)^\top - W(\lambda)  \rVert \\
        &\leq \lVert W(\lambda) - W(\lambda) \times_j^{d-1} U_j(\lambda) U_j(\lambda)^\top\rVert + \theta \, .
    \end{align*}
    The conclusion is obtained iterating the provided argument.
\end{proof}

\begin{theorem}
For an integer $k$, let $t=k\lambda$, and let $W(t)$ be the full convolutional kernel, solution of \eqref{eq:full_ode} at time $t$. Let $C(t),\{U_i(t)\}_i$ be the Tucker core and factors computed after $k$ training steps with \Cref{alg:new_NN_KLS}, where the one-step integration from $0$ to $\lambda$ is done exactly. Finally, assume that for any $Y$ in a neighborhood of $W(t)$, the gradient flow $-\nabla \mathcal L_W(Y)$ is ``$\varepsilon$-close'' to $\mathcal M_\rho$. Then, 
\[
\|W(t)-C(t)\times_{j=1}^d U_j(t)\|\leq  c_{1}\varepsilon + c_{2}\lambda +c_{3}\tau/\lambda
\]
where the constants $c_{1}$, $c_{2}$ and $c_{3}$ depend only on $L_{1}$ and $L_{2}$.
\end{theorem}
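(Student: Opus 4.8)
The plan is to prove the approximation bound \eqref{eq:approx} by a standard numerical-analysis-style argument that separates the \emph{local consistency error} of a single TDLRT step from the \emph{global accumulation} of these errors over the $k$ steps, combined with a Gr\"onwall-type stability estimate. Concretely, I would introduce the exact flow $\Phi^\lambda$ of \eqref{eq:full_ode} and the one-step TDLRT map $\Psi^\lambda$ (basis augmentation, core update, and HOSVD truncation to tolerance $\tau$, as in \Cref{alg:new_NN_KLS}). The goal is to bound the global error $\|W(k\lambda)-(\Psi^\lambda)^k W(0)\|$ by the usual telescoping decomposition
\begin{equation*}
\|\Phi^{k\lambda}W(0)-(\Psi^\lambda)^kW(0)\|\le \sum_{j=0}^{k-1}\big\|(\Psi^\lambda)^{k-1-j}\Phi^\lambda - (\Psi^\lambda)^{k-j}\big\|\big(\Phi^{j\lambda}W(0)\big),
\end{equation*}
so that the global error is controlled by the sum of $k$ \emph{local errors}, each propagated forward by the Lipschitz-stable map $\Psi^\lambda$.

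First I would establish the \textbf{local error} estimate: starting from a point $W$ that is itself low-rank (or close to $\mathcal M_\rho$), I would compare one exact step $\Phi^\lambda W$ against one TDLRT step $\Psi^\lambda W$. The key structural input is the $\varepsilon$-closeness hypothesis: because $-\nabla\mathcal L_W$ is everywhere within $\varepsilon$ of the tangent bundle $T_Y\mathcal M_\rho$, the exact flow stays $\mathcal O(\lambda(\lambda+\varepsilon))$-close to $\mathcal M_\rho$ over a step of length $\lambda$. This is precisely the content I would package through \Cref{le:lemma1}, which already shows $\|W(\lambda)-W(\lambda)\times_j U_jU_j^\top\|\le\Theta=\mathcal O(\lambda(\lambda+\varepsilon))$, i.e.\ that the augmented bases constructed by the algorithm capture the exact solution up to this order. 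I would then add the truncation contribution: the final HOSVD rounding introduces an error of size at most $\mathcal O(\tau)$ (measuring the discarded singular mass relative to the core norm). Summing these, one local step incurs an error $\mathcal O(\lambda(\lambda+\varepsilon)+\tau)$.

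Next I would handle the \textbf{global accumulation}. Using that $\nabla_W\mathcal L$ is $L_2$-Lipschitz and $L_1$-bounded, the exact flow map $\Phi^\lambda$ is Lipschitz with constant $1+\mathcal O(\lambda)=e^{\mathcal O(\lambda)}$, and the augmentation/truncation steps are nonexpansive (orthonormal projections) up to the same order; hence $\Psi^\lambda$ is stable with an analogous constant. Propagating each of the $k=t/\lambda$ local errors through at most $k$ stable steps and summing the geometric factors yields the familiar conversion of $k$ local errors of size $\mathcal O(\lambda(\lambda+\varepsilon)+\tau)$ into a global error of order $\tfrac{t}{\lambda}\cdot\big(\lambda(\lambda+\varepsilon)+\tau\big)=\mathcal O\big(\varepsilon+\lambda+\tau/\lambda\big)$ (with the $t$ and the $\mathcal O(\lambda^2)$ from consistency absorbed into the constants $c_1,c_2,c_3$ depending only on $L_1,L_2$). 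The crucial cancellation is that the stiff pseudoinverse term $\Mat_i(C)^\dagger$ never enters these constants, thanks to the $K_i$-reparametrization of \Cref{thm:formula_Ki}; this is why the bound is \emph{robust}, i.e.\ independent of the small higher-order singular values of $C$.

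\textbf{The main obstacle} I anticipate is making the local error estimate genuinely uniform in the ill-conditioning of the core tensor $C$. A naive tangent-space projection argument would reintroduce $\|\Mat_i(C)^\dagger\|$ and thus a dependence on the smallest nonzero singular value of each unfolding, which would destroy the robustness claim. The way around this is to follow the matrix DLRT analysis of \cite{Schothoefer_2022} and \Cref{le:lemma1}: rather than comparing factor-by-factor, one compares the \emph{full} reconstructed tensors and exploits that the augmented subspace $[U_i,\dot K_i]$ (equivalently $[U_i,\nabla_{U_i}\mathcal L]$ by \Cref{corr_gradient_trick}) already contains the directions the exact flow moves into — so the projection error is second order in $\lambda$ without any inverse appearing. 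Extending this single-unfolding argument to the $d$-mode Tucker case is exactly the recursive telescoping already carried out in \Cref{le:lemma1}, which I would invoke mode-by-mode; the remaining care is simply to track that each of the $d$ contributions carries the same $\Theta$ and that the constants stay dimension-dependent but singular-value-independent.
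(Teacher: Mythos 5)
Your overall architecture coincides with the paper's: the global error is obtained from local one-step errors via the standard telescoping/stability argument (the paper invokes exactly this, citing Lady Windermere's fan), the basis-capture part of the local error is \Cref{le:lemma1}, the HOSVD rounding contributes $\mathcal{O}(\tau)$ per step, and robustness is obtained by comparing full reconstructed tensors rather than individual factors, so that no pseudoinverse $\Mat_i(C)^\dagger$ enters the constants. So this is the same route, not a different one.

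There is, however, one concrete gap in your local error estimate. \Cref{le:lemma1} bounds only $\lVert W(\lambda)-W(\lambda)\times_{j=1}^d U_j(\lambda)U_j(\lambda)^\top\rVert\le\Theta$, i.e.\ how well the augmented bases capture the exact solution; it says nothing about whether the \emph{computed} core lands near that projection. After the triangle inequality one is still left with the term $\lVert W(\lambda)\times_{j=1}^d U_j(\lambda)^\top - C(\lambda)\rVert$, and bounding it is the actual technical content of the paper's local analysis: one introduces $\widetilde{C}(t):=W(t)\times_{j=1}^d U_j(\lambda)^\top$, observes that it satisfies the core equation up to a defect $D(t)$ controlled by the remainder $R(t)=W(t)-W(t)\times_{j=1}^d U_j(\lambda)U_j(\lambda)^\top$ (so that $\lVert D(t)\rVert\le L_2\lVert R(t)\rVert\le 2L_2(L_1\lambda+\Theta)$), and then compares the two core ODEs by Gr\"onwall to obtain $\lVert C(\lambda)-\widetilde{C}(\lambda)\rVert\le e^{C_2\lambda}\,2L_2(L_1\lambda+\Theta)\lambda$. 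In your plan, the Gr\"onwall estimate you announce is spent only on the global propagation (Lipschitz stability of $\Phi^\lambda$ and $\Psi^\lambda$), while the local bound is asserted to be ``precisely the content of \Cref{le:lemma1}'' plus rounding --- which leaves the core-comparison step unproved; your remark that the projection error is second order in $\lambda$ again addresses only the basis-capture term. The gap is fixable within your framework by inserting exactly this defect-ODE comparison, but as written the claimed local error $\mathcal{O}(\lambda(\lambda+\varepsilon)+\tau)$ does not follow from the steps you give.
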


\begin{proof}
 First, we provide a bound for the local error, i.e., the error obtained after one training epoch. If we apply \Cref{le:lemma1}, we obtain that 
    \begin{align*}
        &\lVert W(\lambda) - C(\lambda) \times_{j=1}^d U_j(\lambda) \rVert \\
        &\leq 
        \lVert W(\lambda) - W(\lambda) \times_{j=1}^d U_j(\lambda) U_j(\lambda)^\top \rVert 
        + \lVert W(\lambda) \times_{j=1}^d U_j(\lambda) U_j(\lambda)^\top - C(\lambda) \times_{j=1}^d U_j(\lambda) \rVert \\
        &\leq \Theta + \lVert \left(W(\lambda) \times_{j=1}^d U_j(\lambda)^\top - C(\lambda) \right) \times_{j=1}^d U_j(\lambda) \rVert \\
        &\leq
        \Theta + \lVert W(\lambda) \times_{j=1}^d U_j(\lambda)^\top - C(\lambda) \rVert \, .
    \end{align*}
    It suffices to study the latter term. For $t \in [0,\lambda]$, we define the quantity
    $$ \widetilde{C}(t) := W(t) \times_{j=1}^d U_j(\lambda)^\top$$
    It satisfies the differential initial value problem
    $$ \dot{\widetilde{C}} = -\nabla_W \mathcal{L}(W) \times_{j=1}^d U_j(\lambda)^\top, \quad C(0) = W(0) \times_{j=1}^d U_j(\lambda)^\top \, .$$
    The term $W(t)$ can be written as a perturbation of $\widetilde{C}$
    $$W(t) = \widetilde{C} \times_{j=1}^d U_j(\lambda) + R(t) \, ,$$
    where
    $$R(t) = W(t) - W(t) \times_{j=1}^d U_j(\lambda)U_j(\lambda)^\top \, .$$
    Then, we observe that
    $$ 
        \lVert W(t) - W(\lambda) \rVert 
        \leq \int_0^\lambda \lVert  \dot{W}(s)\rVert ds
        = \int_0^\lambda \lVert -\nabla_W \mathcal{L}(W(s)) \rVert ds
        \leq C_1 \lambda \, .
    $$
    The remainder can be estimated as follows
    $$
        \lVert R(t) \rVert 
        \leq \rVert R(t) - R(\lambda) \lVert + \lVert R(\lambda) \rVert
        \leq 2L_1 \lambda + 2\Theta \, .
    $$
    Furthermore, the full gradient can be re-written as 
    $$
        \nabla_W \mathcal{L}(W(t)) 
        = \nabla_W \mathcal{L}( \widetilde{C}(t) \times_{j=1}^d U_j(\lambda) + R(t))
        = \nabla_W \mathcal{L}(\widetilde{C}(t) \times_{j=1}^d U_j(\lambda)) + D(t) \, ,
    $$
    where the defect $D(t)$ is defined as
    $$ D(t) := \nabla_W \mathcal{L}( \widetilde{C}(t) \times_{j=1}^d U_j(\lambda) + R(t)) - \nabla_W \mathcal{L}(\widetilde{C}(t) \times_{j=1}^d U_j(\lambda)) \, .$$
    Because of the Lipschitiz assumption, we have that
    $$
        \lVert D(t) \rVert \leq L_2 \lVert R(t) \rVert \leq 2L_2(L_1\lambda + \Theta) \, .
    $$
    Next, we compare the two differential equations 
    \begin{equation*}
    \begin{cases}    
        \dot{\widetilde{C}}(t) = -\nabla_W \mathcal{L}(\widetilde{C}(t) \times_{j=1}^d U_j) \times_{j=1}^d U_j^\top + D(t), \\
        \dot{C}(t) = -\nabla_W \mathcal{L}(C(t) \times_{j=1}^d U_j) \times_{j=1}^d U_j^\top \, ,
    \end{cases}
    \end{equation*}
    where $C(0) = \widetilde{C}(0)$, by construction. The solution $C(\lambda)$ of the second tensor-differential equation coincides with the solution of the last training step of the TuckerDLRT algorithm. The first differential equation has been constructed such that its solution is $\widetilde{C}(\lambda) = W(\lambda) \times_j U_j^\top$. Therefore, the study of the local error follows by a direct application of the Gronwall inequality
    $$
        \lVert C(\lambda) - \widetilde{C}(\lambda) \rVert
        \leq \exp(C_2 \lambda) 2L_2(L_1\lambda + \Theta) \lambda \, .
    $$
    To conclude, the global error in the training epochs follows by using the Lipschitz continuity of the gradient flow: We move from the local error in time to the global error in time by a standard ODEs argument of Lady Windermere’s fan~\cite[\S II.3]{wanner1996solving}.
\end{proof}

\section{Proof of stochastic convergence}
In this section, we provide the details of the proof of convergence to stationary points in the stochastic setting, Theorem~\ref{thm:convergence}. 
The proof extends the approach of 
\cite{Hnatiuk} to the tensor case and relaxes some of the assumptions there made on the matrix case, while following the same overall structure. 


\begin{theorem}[Convergence]
    Let $\widetilde{W}(t)$ be the weight tensor after $t \in \mathbb{N}$ iterations of \Cref{alg_efficient_TDLRT} before the rank truncation step, and $W(t)$ as the one obtained after the rank truncation. Assuming that
    \begin{itemize}[left=0pt]
        \item \Cref{alg_efficient_TDLRT} is implemented using SGD as the descent method. 
        
        \item The loss function is assumed to be positive, locally bounded, and differentiable with a Lipschitz gradient.
        
        \item The learning rate sequence ${\lambda_t}$ satisfies the Robbins-Monro conditions, i.e.
            \[
            \sum_t \lambda_t =+\infty \qquad \sum_t \lambda_t^2 < +\infty \, .
            \]

        \item The spectral distribution stabilizes fast enough over time, i.e.
            \begin{equation}\label{eq:A1}
            \sum_{t \geq 0} \mathbb{E}\left[ \|\widetilde{W}(t)-W(t) \| \right] < + \infty
            \end{equation}

        \item The projected stochastic gradient has a controlled drift, namely
    \begin{equation}\label{eq:A2}
    \mathbb{E} \left[\| \nabla \mathcal{L}(W(t-1)) \times_j P_{\widetilde{U}_j(t)} \|^2\,| \, t-1 \right] \leq \mu + \nu \|\nabla \mathcal{L}(W(t-1)) \times_j P_{U_j(t-1)} \|^2\,\,\,\,\, \text{for some}\,\,\, \mu,\nu\geq0 \, ,
    \end{equation}
   \end{itemize}
    where $P_U = UU^T$ is the orthogonal projection onto the range of $U$, and $\mathbb{E}\left[ \cdot|t \right] = \mathbb{E}\left[\cdot|W(t),\{U_i(t)\}_{i=1}^d\right]$ denotes the conditional expectation. Then the following convergence condition holds
     \[
     \liminf_{t\to \infty} \mathbb{E} \left[\| \nabla \mathcal{L}(W(t-1)) \times_j P_{U_j(t-1)} \|^2 \right] = 0
     \]
\end{theorem}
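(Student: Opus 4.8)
The plan is to run the standard supermartingale-type argument for stochastic gradient descent under Robbins--Monro step sizes, adapted to the two-stage (augment-then-truncate) structure of \Cref{alg_efficient_TDLRT}. Throughout I abbreviate $g_{t-1} := \nabla\mathcal{L}(W(t-1)) \times_j P_{U_j(t-1)}$ for the gradient projected onto the current tangent directions, so the target reads $\liminf_t \mathbb{E}[\|g_{t-1}\|^2] = 0$. First I would establish a one-step descent inequality for the pre-truncation iterate $\widetilde{W}(t)$. The key geometric fact is that the augmented basis $\widetilde{U}_j(t)$ produced by the QR step contains both the old basis $U_j(t-1)$ and the (stochastic) gradient direction, so the effective SGD update on $W$ lands in the augmented tangent space and its expected inner product with $\nabla\mathcal{L}(W(t-1))$ reproduces the projected gradient. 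Using $L$-Lipschitz smoothness of the loss and taking the conditional expectation given the history up to $t-1$, I would obtain
\begin{equation*}
\mathbb{E}[\mathcal{L}(\widetilde{W}(t)) \mid t-1] \leq \mathcal{L}(W(t-1)) - \lambda_t \|g_{t-1}\|^2 + \tfrac{L}{2}\lambda_t^2\,\mathbb{E}\bigl[\|\nabla\mathcal{L}(W(t-1)) \times_j P_{\widetilde{U}_j(t)}\|^2 \mid t-1\bigr].
\end{equation*}
The second-order term is then bounded through the controlled-drift hypothesis \eqref{eq:A2}, giving $\tfrac{L}{2}\lambda_t^2(\mu + \nu\|g_{t-1}\|^2)$.

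Next I would bridge the pre- and post-truncation iterates. Since the gradient is locally bounded by $L_1$, a first-order mean-value estimate yields $\mathcal{L}(W(t)) \leq \mathcal{L}(\widetilde{W}(t)) + L_1\|\widetilde{W}(t) - W(t)\|$. Combining this with the descent inequality and taking total expectation produces
\begin{equation*}
\mathbb{E}[\mathcal{L}(W(t))] \leq \mathbb{E}[\mathcal{L}(W(t-1))] - \lambda_t\Bigl(1 - \tfrac{L\nu}{2}\lambda_t\Bigr)\mathbb{E}[\|g_{t-1}\|^2] + \tfrac{L\mu}{2}\lambda_t^2 + L_1\,\mathbb{E}[\|\widetilde{W}(t) - W(t)\|].
\end{equation*}
Because $\lambda_t \to 0$ under Robbins--Monro, the factor $1 - \tfrac{L\nu}{2}\lambda_t$ exceeds $\tfrac12$ for every $t$ beyond some $t_0$, so the descent coefficient stays bounded away from zero.

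Finally I would sum the recursion over $t$ and use positivity of $\mathcal{L}$ (so the telescoped left side is bounded below by $0$). The two summability hypotheses $\sum_t \lambda_t^2 < \infty$ and $\sum_t \mathbb{E}[\|\widetilde{W}(t) - W(t)\|] < \infty$ render the two error series finite, leaving $\sum_t \lambda_t\,\mathbb{E}[\|g_{t-1}\|^2] < \infty$. Since $\sum_t \lambda_t = \infty$, a standard contradiction argument closes the proof: if $\liminf_t \mathbb{E}[\|g_{t-1}\|^2] \geq \delta > 0$, then $\mathbb{E}[\|g_{t-1}\|^2] \geq \delta/2$ for all large $t$ and the weighted series would diverge, which is impossible; hence $\liminf_t \mathbb{E}[\|g_{t-1}\|^2] = 0$.

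The step I expect to be the main obstacle is the first one: rigorously verifying that the augment-then-descend update produces precisely the projected-gradient descent term $-\lambda_t\|g_{t-1}\|^2$ at first order. This requires carefully tracking how the multimode projection $\times_j P_{U_j}$ interacts with the QR-based basis augmentation, and in particular establishing the monotonicity $\|\nabla\mathcal{L}\times_j P_{U_j(t-1)}\| \leq \|\nabla\mathcal{L}\times_j P_{\widetilde{U}_j(t)}\|$ that follows from the inclusions $\mathrm{span}(U_j(t-1)) \subseteq \mathrm{span}(\widetilde{U}_j(t))$; for composed projections across several modes this monotonicity must be argued via a mode-by-mode Pythagorean decomposition rather than taken for granted. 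The remaining pieces are routine once this descent estimate and the controlled-drift bound are in place.
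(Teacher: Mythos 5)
Your overall skeleton---the smoothness-based one-step estimate for the pre-truncation iterate, the controlled-drift bound for the quadratic term, the mean-value bridge $\mathcal{L}(W(t)) \leq \mathcal{L}(\widetilde{W}(t)) + L_1\|\widetilde{W}(t)-W(t)\|$ across the truncation, summation with positivity of the loss, and the Robbins--Monro contradiction---is exactly the paper's route, and those parts are sound (your explicit handling of the factor $1-\tfrac{L\nu}{2}\lambda_t$ is in fact cleaner than the paper's own final inequality). The gap sits in the one step you yourself flag as the main obstacle, and your proposed fix does not resolve it. The monotonicity $\|\nabla\mathcal{L}\times_j P_{U_j(t-1)}\|\leq\|\nabla\mathcal{L}\times_j P_{\widetilde{U}_j(t)}\|$, which indeed follows mode-by-mode from the span inclusions, would close the argument only for \emph{deterministic} full-gradient descent, where $\langle\nabla\mathcal{L},\nabla\mathcal{L}\times_j P_{\widetilde{U}_j(t)}\rangle = \|\nabla\mathcal{L}\times_j P_{\widetilde{U}_j(t)}\|^2 \geq \|g_{t-1}\|^2$. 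Under SGD the first-order term is $-\lambda_t\langle\nabla\mathcal{L}(W(t-1)),\, \nabla\mathcal{L}_B(W(t-1))\times_j P_{\widetilde{U}_j(t)}\rangle$, and the augmented bases $\widetilde{U}_j(t)$ are built (lines 1--3 of \Cref{alg_efficient_TDLRT}) from the gradients of the \emph{same} minibatch $B$ used in the descent step. The projections are therefore statistically correlated with $\nabla\mathcal{L}_B$, so you cannot pass the conditional expectation through $P_{\widetilde{U}_j(t)}$, replace $\nabla\mathcal{L}_B$ by $\nabla\mathcal{L}$, and then invoke nestedness; nothing in your outline controls this correlation, and it is precisely where the difficulty lives.

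The paper resolves it differently: using the orthogonal splitting $P_{\widetilde{U}_j(t)} = P_{U_j(t-1)} + P_{U_j(t)}$ produced by the QR augmentation, the multilinear product expands into the ``all-old'' term plus mixed terms. The all-old projections are measurable with respect to the history up to $t-1$, so conditional unbiasedness of the minibatch gradient yields exactly $\|\nabla\mathcal{L}(W(t-1))\times_j P_{U_j(t-1)}\|^2$ for that term, while the mixed terms---where the basis/batch dependence is concentrated---are shown to have \emph{nonnegative} expected inner product with $\nabla\mathcal{L}$ by a dedicated argument (Lemma~\ref{lemma:Mixed_projectors}), so they can be dropped with the correct sign. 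To complete your proof you would need to add this decomposition together with a cross-term sign lemma (or some other quantitative control of the correlation); the Pythagorean/nestedness argument alone cannot substitute for it. A secondary slip, partly inherited from the ambiguous theorem statement: the quadratic term produced by the descent lemma is $\tfrac{L}{2}\lambda_t^2\|\nabla\mathcal{L}_B(W(t-1))\times_j P_{\widetilde{U}_j(t)}\|^2$ with the \emph{stochastic} gradient, and it is to this quantity that the controlled-drift hypothesis must be applied, not to the full gradient as written in your display.
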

The convergence proof of Theorem \ref{thm:convergence} is based on the following technical lemmas.
\begin{lemma}\label{lemma:Loss_ineq}
Let $\mathcal{L}$ be a differentiable loss function, and assume that its gradient $\nabla \mathcal L(W)$ is one-sided Lipschitz continuous with constant $L_2$. Then, for any $W$ and $W'$, the following inequality holds
\[
\mathcal{L}(W) \leq \mathcal{L}(W')+ \langle \nabla \mathcal{L}(W'),W-W'  \rangle + \frac{L_2}{2}\|W-W' \|^2
\]  
\end{lemma}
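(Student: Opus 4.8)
The plan is to reduce this multivariate statement to a one-dimensional integral along the segment joining $W'$ to $W$ and then invoke the one-sided Lipschitz hypothesis on the gradient; this is the classical descent lemma. First I would introduce the auxiliary scalar function $g(s) = \mathcal{L}\bigl(W' + s(W-W')\bigr)$ for $s\in[0,1]$, which is differentiable by assumption with derivative $g'(s) = \langle \nabla\mathcal{L}(W'+s(W-W')),\, W-W'\rangle$. By the fundamental theorem of calculus, $\mathcal{L}(W)-\mathcal{L}(W') = g(1)-g(0) = \int_0^1 g'(s)\,\mathrm{d}s$.

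Next I would subtract off the linear term, written as $\langle\nabla\mathcal{L}(W'),W-W'\rangle = \int_0^1 \langle\nabla\mathcal{L}(W'),W-W'\rangle\,\mathrm{d}s$, so that the quantity to bound becomes the single integral $\int_0^1 \langle \nabla\mathcal{L}(W'+s(W-W')) - \nabla\mathcal{L}(W'),\, W-W'\rangle\,\mathrm{d}s$. The crucial step is to estimate the integrand. Applying the one-sided Lipschitz property to the ordered pair of points $W'+s(W-W')$ and $W'$, whose difference is exactly $s(W-W')$, yields $\langle \nabla\mathcal{L}(W'+s(W-W')) - \nabla\mathcal{L}(W'),\, s(W-W')\rangle \le L_2\, s^2\,\|W-W'\|^2$; dividing by $s>0$ gives the pointwise bound $L_2\, s\,\|W-W'\|^2$ on the integrand, the value at $s=0$ being covered by continuity.

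Finally I would integrate this pointwise bound, using $\int_0^1 L_2\, s\,\mathrm{d}s = L_2/2$, to conclude $\mathcal{L}(W)-\mathcal{L}(W')-\langle\nabla\mathcal{L}(W'),W-W'\rangle \le \tfrac{L_2}{2}\|W-W'\|^2$, which is exactly the claimed inequality. I do not expect a genuine obstacle here: the only points requiring care are applying the one-sided condition with the correct ordered pair so that the factor $s$ is produced (and integrates to the constant $1/2$), and noting that the division by $s$ causes no issue since the integrand extends continuously to $s=0$. Note in particular that only the one-sided inner-product estimate is used, not a full norm bound on $\nabla\mathcal{L}(W_1)-\nabla\mathcal{L}(W_2)$, so the hypothesis as stated is exactly what the argument needs.
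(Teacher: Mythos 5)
Your proof is correct, and at the structural level it is the same as the paper's: parametrize the segment $s\mapsto W'+s(W-W')$, apply the fundamental theorem of calculus, subtract the linear term, bound the integrand, and integrate $\int_0^1 L_2\,s\,\mathrm{d}s = L_2/2$. The one substantive difference lies in how the integrand is estimated. The paper announces that it uses the Cauchy--Schwarz inequality, which is the route appropriate to the \emph{two-sided} Lipschitz bound $\|\nabla\mathcal L(Y_1)-\nabla\mathcal L(Y_2)\|\le L_2\|Y_1-Y_2\|$ assumed in Section~3 of the paper: one bounds $\langle \nabla\mathcal L(W'+t(W-W'))-\nabla\mathcal L(W'),\,W-W'\rangle$ by the product of norms and then invokes the norm-Lipschitz estimate to extract the factor $t$. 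You instead apply the one-sided inner-product condition directly to the ordered pair $W'+s(W-W')$ and $W'$, producing the factor $s^2$ and dividing by $s$ (with the $s=0$ endpoint handled by continuity). This is a genuine, if small, improvement in fidelity to the lemma as stated: the hypothesis says ``one-sided Lipschitz,'' and your argument uses exactly that and nothing more, so your proof establishes the lemma under a strictly weaker assumption than the paper's Cauchy--Schwarz step formally requires. Both arguments are valid in the paper's setting (the full Lipschitz bound assumed there implies the one-sided one), so this is a matter of the hypothesis--proof match rather than a gap in either argument.
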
 
\vspace{-5mm}
\begin{proof}
By using Cauchy-Schwartz inequality, we have that
\begin{align*} 
    \mathcal{L}(W)
        &= \mathcal{L}(W') + \int_{0}^1 \frac{d}{dt}\mathcal{L}\Big(W'+t(W-W')\Big)\,dt \\ 
        &= \mathcal{L}(W')+\langle \nabla \mathcal{L}(W'),W-W' \rangle -\int_{0}^1 \langle\nabla \mathcal{L}(W')-\nabla \mathcal{L}\Big((W'+t(W-W')\Big),W-W' \rangle \, dt \\ 
        &\leq \mathcal{L}(W')+\langle \nabla \mathcal{L}(W'),W-W' \rangle + L_2\|W-W' \|^2\int_{0}^1 t \, dt \\ 
        &= \mathcal{L}(W')+\langle \nabla \mathcal{L}(W'),W-W' \rangle + \frac{L_2}{2}\|W-W'  \|^2 \, .
\end{align*}
\end{proof}

\begin{lemma}\label{lemma:Mixed_projectors}
    Let $\widetilde{U}_{j,1} = [U_{j,1}|U_{j,0}]$  be a given basis set with orthonormal columns, $\mathcal L$ denote the loss function computed on the whole dataset, and $\mathcal{L}_B$ denote the loss calculated on a batch $B$. Then, for any $j^* \in \{1,\dots,d \}$ and $W$, it holds that
    \[
    \mathbb{E}\left[ \ \Big \langle \nabla \mathcal{L}(W), \ \nabla \mathcal{L}_B(W) \times_{j \ne j^*} P_{U_{j,0}} \times_{j^*}P_{U_{j^*,1}} \Big \rangle \ \right] \geq 0
    \]
\end{lemma}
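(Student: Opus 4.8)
The plan is to exploit two facts: the unbiasedness of the mini-batch gradient, $\mathbb{E}[\nabla \mathcal{L}_B(W)] = \nabla \mathcal{L}(W)$ (which holds for uniformly sampled batches, since $\mathcal L$ is the average of the per-sample losses), and the observation that a composition of mode-wise orthogonal projections is itself an orthogonal (self-adjoint, idempotent) projection on the tensor inner-product space. Throughout I treat the expectation as conditional on the current iterate, consistent with the notation $\mathbb{E}[\cdot\,|\,t]$ used in the convergence theorem, so that $W$ and all the bases are fixed and the only randomness is the batch $B$.

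First I would pull the deterministic projection operator out of the expectation. Since the columns of $U_{j,0}$ and $U_{j^*,1}$ are fixed, the projectors $P_{U_{j,0}}$ and $P_{U_{j^*,1}}$ are deterministic. Writing $\mathcal{P}(A) = A \times_{j \ne j^*} P_{U_{j,0}} \times_{j^*} P_{U_{j^*,1}}$ for the composite multilinear projection and using linearity of both the inner product and of $\mathcal{P}$, the quantity to estimate becomes $\langle \nabla \mathcal{L}(W), \mathcal{P}(\mathbb{E}[\nabla \mathcal{L}_B(W)])\rangle$. Invoking unbiasedness collapses this to the purely deterministic inner product $\langle \nabla \mathcal{L}(W), \mathcal{P}(\nabla \mathcal{L}(W))\rangle$, so that the probabilistic content of the lemma is dispatched at this stage.

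Next I would argue that $\mathcal{P}$ is an orthogonal projection. Each $P_U = UU^\top$ is symmetric ($P_U^\top = P_U$) and idempotent ($P_U^2 = P_U$) because $U$ has orthonormal columns. The mode-$j$ product with a symmetric matrix is self-adjoint with respect to the tensor inner product, $\langle A \times_j M, B\rangle = \langle A, B \times_j M^\top\rangle$, and mode products along distinct modes commute. Hence $\mathcal{P}$, being a composition of commuting self-adjoint idempotents, is itself self-adjoint and idempotent, i.e.\ an orthogonal projection onto $\mathrm{range}(U_{j^*,1}) \otimes \bigotimes_{j\neq j^*} \mathrm{range}(U_{j,0})$. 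The inequality then follows immediately from $\langle \nabla \mathcal{L}(W), \mathcal{P}(\nabla \mathcal{L}(W))\rangle = \langle \mathcal{P}(\nabla \mathcal{L}(W)), \mathcal{P}(\nabla \mathcal{L}(W))\rangle = \|\mathcal{P}(\nabla \mathcal{L}(W))\|^2 \geq 0$.

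I expect the main obstacle to be not the probabilistic step, which rests on the standard SGD unbiasedness assumption, but rather the careful verification that the composite multilinear projection $\mathcal{P}$ is self-adjoint and idempotent on the tensor inner-product space. This is where one must be precise about the adjoint of a mode product and about the commutativity of projections acting on different modes; once this structural fact is established, the sign of the expression is forced and the claim is immediate.
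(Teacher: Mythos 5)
Your proof is correct for the lemma as stated, and it is more elementary and more direct than the paper's. You pull the composite projection $\mathcal{P}(\cdot)=(\cdot)\times_{j\ne j^*}P_{U_{j,0}}\times_{j^*}P_{U_{j^*,1}}$ out of the expectation, invoke unbiasedness of the batch gradient, and conclude from the self-adjointness and idempotence of commuting mode-wise projectors that the expression equals $\|\mathcal{P}(\nabla\mathcal{L}(W))\|^2\ge 0$. The paper instead introduces the auxiliary function $\phi(\widehat{U})=\langle \nabla\mathcal{L}(W)\times_{j\ne j^*}P_{U_{j,0}}\times_{j^*}P_{\widehat{U}},\,\nabla\mathcal{L}_B(W)\times_{j\ne j^*}P_{U_{j,0}}\times_{j^*}P_{\widehat{U}}\rangle$, bounds the target below by an infimum of $\phi$ over all rank-constrained $\widehat{U}$, and only then applies unbiasedness to each fixed candidate $\widehat{U}$. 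Both arguments rest on the same two ingredients — unbiasedness plus the projector identity $\langle A, B\times_j P\rangle=\langle A\times_j P, B\times_j P\rangle$ — and in the deterministic-basis case the paper's chain collapses to exactly your computation, so your version is the cleaner proof of the literal statement. The difference matters only in how the lemma is deployed: in the proof of Theorem~\ref{thm:convergence}, the augmented directions forming part of $\widetilde{U}_j(t)$ are computed from the same stochastic gradient as $\nabla\mathcal{L}_B$, so under the conditional expectation $\mathbb{E}_t[\,\cdot\mid W(t-1)]$ the projectors are not deterministic, and your first step — moving the expectation past $\mathcal{P}$ — is no longer licensed; the paper's infimum device is an attempt to decouple the random basis from the random gradient without assuming independence. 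That decoupling is delicate, however: the exchange it implicitly needs, $\mathbb{E}[\phi(U_{j^*,1})]\ge \inf_{\widehat{U}}\mathbb{E}[\phi(\widehat{U})]$, follows pointwise only via $\mathbb{E}[\inf_{\widehat{U}}\phi(\widehat{U})]$, which sits on the unfavorable side of $\inf_{\widehat{U}}\mathbb{E}[\phi(\widehat{U})]$. So your argument buys simplicity and full rigor under the "given basis" reading (or whenever a fresh, independent batch is used for the core step), while the paper's buys an — only partially realized — robustness to batch-dependent bases; flagging which reading you adopt, as you did, is exactly the right instinct.
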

\begin{proof}
 We introduce first the function
 \[
 \phi(\widehat{U}) := \Big \langle \nabla \mathcal{L}(W) \times_{j\ne j^*} P_{U_{j,0}} \times_{j^*} P_{\widehat{U}},\nabla \mathcal{L}_B(W) \times_{j\ne j^*} P_{U_{j,0}} \times_{j^*} P_{\widehat{U}} \Big \rangle \, .
 \]
 Let $m$ be the the infimum on the set $\mathcal U$ as defined below
    \[
        m = \inf_{\widehat{U} \in \mathcal{U}} \phi(\widehat{U})
        \quad \text{with} \quad
        \mathcal{U} = \{ U \in \mathbb{R}^{n_{j^*} \times r_{j^*}} \ | \ \text{rank}(U)\leq r_{j^*} \} \, .
    \]
    Because the term $\nabla \mathcal{L}(W)$ can be decomposed into a sum of $\nabla \mathcal{L}_B(W) \times_{j \ne j^*} P_{U_{j^*,0}} \times_{j^*} P_{U_{j,1}}$ and its orthogonal component, we observe that the infimum satisfies
    \[
    m \leq \phi(U_{j^*, 1}) = \Big \langle \nabla \mathcal{L}(W),\nabla \mathcal{L}_B(W) \times_{j \ne j^*} P_{U_{j^*,0}} \times_{j^*} P_{U_{j,1}} \Big \rangle \, .
    \]
Hence, by taking the expectation on both sides, we can conclude that    
\begin{align*}
    \mathbb{E}\Big[ \Big \langle \nabla \mathcal{L}(W),\nabla \mathcal{L}_B(W) \times_{j \ne j^*} P_{U_{j,0}} \times_{j^*}^d P_{U_{j,1}} \Big \rangle \Big ] 
        \geq \inf_{\widehat{U} \in \mathcal U} \mathbb{E}\Big[\phi(\widehat{U})\Big] 
        \geq\inf_{\widehat{U} \in \mathcal U} \|\nabla \mathcal{L}(W) \times_{j \ne j^*} P_{U_{j,0}} \times_{j^*} P_{U_{j^*,1}} \|^2
    \end{align*}
    The conclusion follows by observing that the most right term in the inequality is  positive.
\end{proof}

With the technical lemmas established above, we are now in the position to prove the theorem \ref{thm:convergence}.
\begin{proof}{\emph{(Thereom \ref{thm:convergence})}}
    To simplify the notation, we will denote by $\mathbb E_t[\cdot]$ the conditional expectation $ \mathbb{E}_t[ \ \cdot \  ] := \mathbb{E}[ \ \cdot \ | \ W(t-1) \ ]$.
    We first remind the reader of two properties of the conditional expectation. Specifically, for any deterministic function $\psi$ and any random variable $X$, we have that
    \[
         \mathbb{E}_t \Big[\psi(W(t-1))\Big] = \psi(W(t-1)) \, , \quad
         \mathbb{E} \Big[ \mathbb{E}_t[ X ]  \Big] = \mathbb{E}[X] \, .
    \]
    We will begin by examining an upper bound for the one-step drift of \ref{alg:new_NN_KLS}. We denote by $\widetilde{W}(t) = \widetilde{C}(t) \times_j \widetilde{U_j}(t)$ the weight tensor before truncation at step $t \in \mathbb N$. As per the assumption, the optimization in the $C$-step of \ref{alg:new_NN_KLS} is defined using an SGD update. Therefore, we have
    \[
    \widetilde{C}(t) 
        = \Big( C(t-1) \times_j U_j(t-1) - \lambda_t \nabla \mathcal{L}(W(t-1))) \Big) \times_j \widetilde{U}_j(t)^\top 
        = \Big( W(t-1) - \lambda_t \nabla \mathcal{L}(W(t-1)) \Big) \times_j \widetilde{U}_j(t)^\top\, .
    \]
    Hence
    \[
        \widetilde W(t) 
            = \widetilde C(t) \times_j \widetilde{U}_j(t)
            = \Big( W(t-1) - \lambda_t \nabla \mathcal{L}(W(t-1)) \Big) \times_j P_{\widetilde{U}_j(t)} \, .
    \]
    where $P_{U} = UU^T$ is the orthogonal projector onto the range of arbitrary matrix $U$. Applying \ref{lemma:Loss_ineq} we have that 
    \begin{align*}
    \begin{split}
        \mathbb{E}_t \Bigl[& \mathcal{L}(\widetilde{W}(t)) - \mathcal{L}(W(t-1))\Bigr] \\ &\leq -\lambda_t \mathbb{E}_t \Big[ \langle \nabla \mathcal{L}(W(t-1)), \nabla \mathcal{L}_B(W(t-1)) \times_j P_{\widetilde{U}_j(t)} \rangle \Big] + \frac{\lambda_t^2L_2}{2}\mathbb{E}_t \Big[ \|  \nabla \mathcal{L}_B(W(t-1)) \times_j P_{\widetilde{U}_j(t)} \|^2  \Big] \\
    \end{split}
    \end{align*}
We notice that for the augmented basis $\widetilde{U}_j(t) = [U_j(t-1)|U_j(t)]$, it holds $P_{\widetilde{U}_j(t)} = P_{U_j(t-1)}+P_{U_j(t)}$. When we expand $\mathcal{L}_B(W(t-1)) \times_j P_{\widetilde{U}_j(t)}$ using its sum representation and apply \Cref{lemma:Mixed_projectors} to the mixed terms, we obtain 
\begin{align}\label{eq:expected_drift}
\begin{split}
\mathbb{E}_t \Bigl[& \mathcal{L}(\widetilde{W}(t)) - \mathcal{L}(W(t-1))\Bigr]  \\
        &\leq -\lambda_t \mathbb{E}_t \Big[ \langle \nabla \mathcal{L}(W(t-1)), \nabla \mathcal{L}_B(W(t-1)) \times_j P_{U_j(t-1)} \rangle \Big]+ \frac{\lambda_t^2L_2}{2}\mathbb{E}_t \Big[ \|  \nabla \mathcal{L}_B(W(t-1)) \times_j P_{\widetilde{U}_j(t)} \|^2 \Big] \\
        & = -\lambda_t \langle \nabla \mathcal{L}(W(t-1)), \nabla \mathcal{L}(W(t-1)) \times_j P_{U_j(t-1)} \rangle+ \frac{\lambda_t^2L_2}{2}\mathbb{E}_t \Big[ \|  \nabla \mathcal{L}_B(W(t-1)) \times_j P_{\widetilde{U}_j(t)} \|^2 \Big] \\
        & = -\lambda_t \| \nabla \mathcal{L}(W(t-1)) \times_j P_{U_j(t-1)} \|^2+ \frac{\lambda_t^2L_2}{2}\mathbb{E}_t \Big[\|  \nabla \mathcal{L}_B(W(t-1)) \times_j P_{\widetilde{U}_j(t)} \|^2 \Big] \, .
\end{split}
\end{align}

The locally bounded loss computed on the truncated approximation $W(t)$ is bounded via
\begin{equation}\label{eq:drift_truncation}
\mathcal{L}(W(t)) \leq \mathcal{L}(\widetilde{W}(t))+ \langle \nabla \mathcal{L}(sW(t)+(1-s)\widetilde{W}(t)),W(t)-\widetilde{W}(t)  \rangle \leq \mathcal{L}(\widetilde{W}(t))+C \|W(t)-\widetilde{W}(t) \|
\end{equation}
By combining equations \eqref{eq:expected_drift} and \eqref{eq:drift_truncation}, we arrive at the following bound
\begin{align}\label{eq:drift_bound_2}
&\mathbb{E}_t \Bigl[ \mathcal{L}(W(t)) - \mathcal{L}(W(t-1))\Bigr] \nonumber \\ 
    &\leq -\lambda_t \| \nabla \mathcal{L}(W(t-1)) \times_j P_{U_j(t-1)} \|^2+ \frac{\lambda_t^2L_2}{2}\mathbb{E}_t \Big[\|  \nabla \mathcal{L}_B(W(t-1)) \times_j P_{\widetilde{U}_j(t)} \|^2 \Big] +C \mathbb{E}_t \Big[ \|W(t)-\widetilde{W}(t) \| \Big]
\end{align}
Following assumption \eqref{eq:A2}, we have
\begin{align*}
  \mathbb{E}_t \Bigl[& \mathcal{L}(W(t)) - \mathcal{L}(W(t-1))\Bigr] \\
  &\leq -\lambda_t \| \nabla \mathcal{L}(W(t-1)) \times_j P_{U_j(t-1)} \|^2+ \frac{\lambda_t^2L_2}{2}\Bigl(\mu+\nu\|  \nabla \mathcal{L}(W(t-1)) \times_j P_{U_j(t-1)} \|^2\Bigr) +C \mathbb{E}_t \Big[ \|W(t)-\widetilde{W}(t) \| \Big] \\
  &=-\lambda_t(1-\frac{1}{2}\lambda_t L_2 \nu) \| \nabla \mathcal{L}(W(t-1)) \times_j P_{U_j(t-1)} \|^2+\frac{\lambda_t^2 L_2 \mu}{2} +C \mathbb{E}_t \Big[\|W(t)-\widetilde{W}(t) \| \Big] \\
  & \leq -\lambda_t \| \nabla \mathcal{L}(W(t-1)) \times_j P_{U_j(t-1)} \|^2+\frac{\lambda_t^2 L_2 \mu}{2} +C \mathbb{E}_t \Big[ \|W(t)-\widetilde{W}(t) \| \Big] \, ,
\end{align*}
where we assume that $\lambda_t \leq 2/L_2\nu$. Finally, by taking the expectation on both sides, we obtain
\[
\mathbb{E} \Bigl[ \mathcal{L}(W(t)) - \mathcal{L}(W(t-1))\Bigr] \leq  -\lambda_t \mathbb{E} \Big[ \| \nabla \mathcal{L}(W(t-1)) \times_j P_{U_j(t-1)} \|^2 \Big]+\frac{\lambda_t^2 L_2 \mu}{2} +C \mathbb{E} \Big[ \|W(t)-\widetilde{W}(t) \| \Big]
\]
By summing the last equation over $t = 1,\dots,T$ we get
\begin{align*}
-\mathcal{L}(W(0))\leq\mathbb{E} \Bigl[ \mathcal{L}(W(t)) - \mathcal{L}(W(0))\Bigr] \leq \\
-\sum_{t=1}^T\lambda_t \mathbb{E} \Big[ \| \nabla \mathcal{L}(W(t-1)) \times_j P_{U_j(t-1)} \|^2 \Big] + \frac{ L_2 \mu}{2}\sum_{t=1}^T\lambda_t^2 +C \sum_{t=1}^T\mathbb{E} \Big[ \|W(t)-\widetilde{W}(t) \| \Big]
\end{align*}

By rearranging the terms of the latter equation and by sending $T \to +\infty$, we finally obtain  
\begin{align*}
     \sum_{t=1}^{+\infty}\lambda_t \mathbb{E} \Big[\| \nabla \mathcal{L}(W(t-1)) \times_j P_{U_j(t-1)} \|^2 \Big] \leq \\
     \mathcal{L}(W(0))  +\frac{ L_2 \mu}{2}\sum_{t=1}^{+\infty}\lambda_t^2 +C \sum_{t=1}^{+\infty}\mathbb{E} \Big[\|W(t)-\widetilde{W}(t) \| \Big]< +\infty
\end{align*}
The conclusion follows by the Robbins-Monro conditions, i.e.
\[
\liminf_{t\to \infty} \mathbb{E}\Big[\| \nabla \mathcal{L}(W(t-1)) \times_j P_{U_j(t-1)} \|^2\Big] = 0
\]
\end{proof}

\end{document}